\newtheorem{defn}{Definition}
\newtheorem{lem}[defn]{Lemma}
\newtheorem{prop}[defn]{Proposition}
\newtheorem{assum}[defn]{Assumption}
\newtheorem{thm}[defn]{Theorem}
\providecommand{\R}{\ensuremath \mathbb{R}}
\providecommand{\N}{\ensuremath \mathbb{N}}
\providecommand{\T}{\ensuremath T}
\providecommand{\tfin}{t_\regtext{f}}
\renewcommand{\P}{\ensuremath \mathcal{P}}
\newcommand{\norm}[1]{\left\Vert#1\right\Vert}
\newcommand{\defemph}[1]{\emph{#1}}
\newcommand{\inv}{^{-1}}
\newcommand{\ts}[1]{\textsuperscript{#1}}
\newcommand{\kp}{\kappa}
\newcommand{\regtext}[1]{\mathrm{\textnormal{#1}}}
\newcommand{\hi}{_\regtext{hi}}
\newcommand{\FRS}{\tilde{F}}
\newcommand{\FRSexact}{F}
\newcommand{\plan}{_\regtext{plan}}
\newcommand{\peak}{_\regtext{pk}}
\newcommand{\peaki}{_{\regtext{pk},i}}
\newcommand{\peakj}{_{\regtext{pk}}^{(j)}}
\newcommand{\upi}{^{(i)}}
\newcommand{\upj}{^{(j)}}
\newcommand{\upn}{^{(n)}}
\newcommand{\upm}{^{(m)}}
\newcommand{\sense}{_\regtext{sense}}
\newcommand{\des}{_\regtext{des}}
\newcommand{\unsafe}{_\regtext{u}}
\newcommand{\vol}{\regtext{vol}}
\newcommand{\idx}{\regtext{proj}_X}
\newcommand{\idpeak}{\regtext{proj}_{K\peak}}
\newcommand{\vmax}{{v_\regtext{max}}}
\newcommand{\amax}{{a_\regtext{max}}}
\newcommand{\bt}{\beta}
\newcommand{\dl}{\delta}
\newcommand{\Dl}{\Delta}
\newcommand{\gm}{\gamma}
\newcommand{\om}{\omega}
\newcommand{\Om}{\Omega}
\definecolor{Gray}{gray}{0.9}
\newcolumntype{g}{>{\columncolor{Gray}}c}
\newcommand{\SO}{\mathsf{SO}}
\newcommand{\oneD}{_{\regtext{1D}}}
\newcommand{\oneDt}{_{\regtext{1D}}^{(t)}}
\newcommand{\oneDo}{_{\regtext{1D}}^{(0)}}
\newcommand{\gapp}{\tilde{g}}
\newcommand{\Dcover}{\mathscr{D}}
\newcommand{\nObs}{n_\mathscr{O}}
\newcommand{\obsset}{\mathscr{O}}
\newcommand{\indext}{^{(t)}}
\newcommand{\BQR}{B_{\regtext{QR}}}
\newcommand{\QR}{_{\regtext{QR}}}
\newcommand{\diag}{\regtext{diag}}
\newcommand{\boxfun}{\regtext{box}}
\newcommand{\sensorhorizon}{d\sense}
\newcommand{\epsig}{_{\epsilon, \sigma}}
\newcommand{\pkunsafe}{_\regtext{pk,u}}
\newcommand{\uptj}{^{(t, j)}}
\newcommand{\pkunsafei}{_\regtext{pk,u,i}}
\newcommand{\pkunsafeone}{_\regtext{pk,u,1}}
\newcommand{\pkunsafetwo}{_\regtext{pk,u,2}}
\newcommand{\pkunsafethree}{_\regtext{pk,u,3}}
\newcommand{\gmxv}{\gamma_{x,v}}
\newcommand{\gmxa}{\gamma_{x,a}}
\newcommand{\gmxpk}{\gamma_{x,\regtext{pk}}}
\newcommand{\gmv}{\gamma_{v}}
\newcommand{\gma}{\gamma_{a}}
\newcommand{\gmpk}{\gamma_{\regtext{pk}}}
\newcommand{\betav}{\beta_\regtext{v}}
\newcommand{\betaa}{\beta_\regtext{a}}
\newcommand{\betapk}{\beta_\regtext{pk}}
\newcommand{\betapkmin}{\beta_\regtext{pk}^\regtext{min}}
\newcommand{\betapkmax}{\beta_\regtext{pk}^\regtext{max}}
\newcommand{\xobsm}{x^-_\regtext{obs}}
\newcommand{\xobsp}{x^+_\regtext{obs}}
\title{\LARGE \bf Technical Report: Safe, Aggressive Quadrotor Flight via Reachability-based Trajectory Design}
\author{Shreyas Kousik, Patrick Holmes, Ram Vasudevan$^*$
\thanks{This work has been accepted to the 2019 ASME Dynamic Systems and Control Conference.
The authors are supported by the Office of Naval Research under award number N00014-18-1-2575, and by the National Science Foundation Award \#1751093}
\thanks{$^{*}$Mechanical Engineering, University of Michigan, Ann Arbor, MI {\tt\small <skousik,pdholmes,ramv>@umich.edu}}
}
\begin{document}

\maketitle
\thispagestyle{empty}
\pagestyle{plain}

\begin{abstract}\emph{ 
Quadrotors can provide services such as infrastructure inspection and search-and-rescue, which require operating autonomously in cluttered environments. 
Autonomy is typically achieved with receding-horizon planning, where a short plan is executed while a new one is computed, because sensors receive limited information at any time.
To ensure safety and prevent robot loss, plans must be verified as collision free despite uncertainty (e.g, tracking error).
Existing spline-based planners dilate obstacles uniformly to compensate for uncertainty, which can be conservative.
On the other hand, reachability-based planners can include trajectory-dependent uncertainty as a function of the planned trajectory.
This work applies Reachability-based Trajectory Design (RTD) to plan quadrotor trajectories that are safe despite trajectory-dependent tracking error.
This is achieved by using zonotopes in a novel way for online planning.
Simulations show aggressive flight up to 5 m/s with zero crashes in 500 cluttered, randomized environments.
}
\end{abstract}

\section{Introduction}\label{sec:introduction}

Autonomous unmanned aerial robots, such as quadrotors, can replace humans for dangerous tasks such as infrastructure inspection and search-and-rescue, which require navigating cluttered environments.
These robots are maneuverable, but often expensive and delicate.
Therefore, verifying they can operate \defemph{safely} (meaning, without collision) is important to enable them to provide such services.
Such verification is difficult because state space models of aerial robots are typically nonlinear and have at least 12 states \cite{Lee2010_geom_ctrl_SE3}.
In addition, these robots typically perform receding-horizon planning, where they execute a short trajectory while planning the next one, because the robot's sensor information is limited at any time.
So, the robot must plan trajectories that are verified as dynamically feasible and safe in real time.
This paper plans verified trajectories for quadrotor by extending the existing Reachability-based Trajectory Design (RTD) method.

\begin{figure}[t]
    \centering
    \includegraphics[width=0.9\columnwidth]{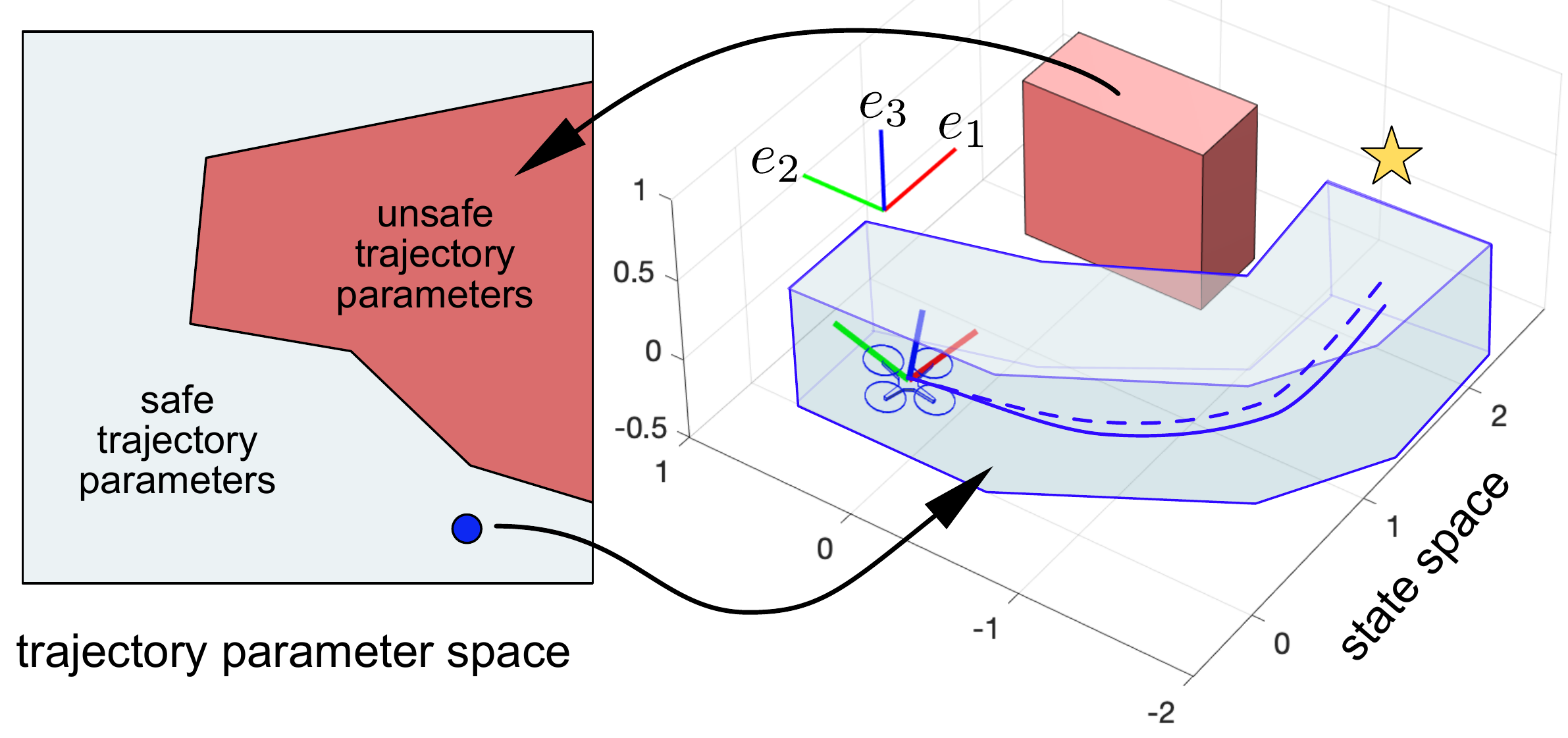}
    \caption{An overview of the proposed method.
    Points in the trajectory parameter space (on the left) correspond to desired trajectories (the dashed blue line on the right).
    The quadrotor, shown with its body-fixed coordinate frame, executes the solid blue trajectory, which has tracking error that depends on the desired trajectory.
    A Forward Reachable Set (FRS) is computed over all desired trajectories, plus tracking error and the body of the robot.
    The FRS is intersected with obstacles (red box on the right) to identify unsafe trajectories (red area on the left).
    Then, the subset of the FRS given by any safe trajectory parameter (blue tube on the right) will not intersect any obstalces.
    The particular desired trajectory shown attempts to reach a waypoint (gold star).
    A video is available at \href{http://roahmlab.com/quadrotor_rtd_demo}{\texttt{http://roahmlab.com/quadrotor\_rtd\_demo}}.
    }
    \label{fig:overview}
\end{figure}

\subsection{Related Work}
Quadrotor trajectory planners typically generate time-varying polynomial splines in position \cite{mellinger2011_min_snap,Richter2016_polynomial_traj_gen}.
Such splines have closed-form solutions for desired position, velocity, and higher derivatives, so they compute quickly \cite{mueller2015_quadrotor_splines}, and one can prove that they only lie within obstacle-free space \cite{Chen2016_online_safe_traj_gen}.
Since these splines are smooth, they can typically be tracked within 0.1 m of \defemph{tracking error} at speeds up to 8 m/s \cite{tal2018_INDI_control}; so, spline-based approaches typically treat tracking error implicitly, by dilating all obstacles by a fixed amount, which can be conservative.
These methods then rely on the quadrotor's trajectory-tracking \defemph{low-level controller} to ensure the quadrotor does not crash.
Since low-level controllers can compensate for aerodynamic and model disturbances \cite{tal2018_INDI_control,bouffard2012_lbmpc}, and large orientation deviations from a reference trajectory \cite{Lee2010_geom_ctrl_SE3,faessler2015_recovery_aggro_flight}, these approaches have been successful at navigating unknown, cluttered environments.

However, it is unclear how these methods can be extended to incorporate \defemph{trajectory-dependent uncertainty} (such as tracking error or aerodynamic disturbance) into online planning without dilating obstacles uniformly.
On the other hand, reachability-based methods address this issue by explicitly modeling trajectory-dependent uncertainty.
FasTrack, for example, computes tracking error as a function of control inputs and generates a feedback controller to compensate for it, which has been demonstrated on near-hover quadrotors \cite{herbert2017fastrack,bajcsy2018_multi_robot_collision_avoidance}.
Zonotope reachability can similarly compute tracking error, and has been shown on helicopters and cars, but requires computing a reachable set at every planning iteration, which can be too slow for real-time planning with high-dimensional system models \cite{althoff2015_helicopter,althoff2014online}.
Our prior work, called Reachability-based Trajectory Design (RTD), uses a parameterized continuum of low-dimensional trajectories, and computes a Forward Reachable Set (FRS) of the trajectories plus tracking error offline with Sums-of-Squares (SOS) programming.
Online, it maps obstacles to the trajectory parameter space via the FRS, then chooses a trajectory from the remaining safe set \cite{kousik2017safe}.
However, RTD has been only been shown for ground robots and low-dimensional models \cite{kousik2018_RTD_ijrr,vaskov2019_ACC,vaskov2019_RSS}.

\subsection{Contribution and Paper Organization}
This work extends RTD using zonotope reachability to produce guaranteed-safe reference trajectories for a quadrotor despite trajectory-dependent tracking error, as shown in Figure \ref{fig:overview}.
The contributions of this work are as follows.
First, we extend the dynamic models used in RTD from 2D to 3D (Section \ref{sec:models}).
Second, we present a method for approximating tracking error (Section \ref{sec:tracking_error}).
Third, we extend the FRS computation in RTD from SOS programming to zonotope reachability, which lets us increase the dimension of the FRS from 5 to 13 (Section \ref{sec:reachability}).
Fourth, we use the zonotope FRS to plan in real time, without recomputing the reachable set at every iteration, while adding trajectory-dependent tracking error online (Section \ref{sec:online_planning}).
We present simulation results in Section \ref{sec:results} and concluding remarks in Section \ref{sec:conclusion}.
A video is available at \href{http://roahmlab.com/quadrotor_rtd_demo}{\texttt{roahmlab.com/quadrotor\_rtd\_demo}}.

\subsection{Notation}
We adopt the following notation.
Variables, points, and functions are lowercase; sets and matrices are uppercase.
For a point $p$, $\{p\}$ denotes a set containing that point as its only element.
A multi-dimensional point or vector $v$ with elements $v_1, v_2$ is $(v_1,v_2)$.
Column vectors are in brackets in equations.
Subscripts indicate a subspace or description; superscripts in parentheses indicate an index.
An $n\times n$ identity matrix is $I_{n\times n}$.
An $m\times p$ matrix of zeros is $0_{m\times p}$.
An $n\times n$ matrix with diagonal elements $d_1, d_2, \cdots d_n$ is denoted $\mathrm{diag}(d_1,d_2,\cdots,d_n)$.
The positive real line is $ \R_+ = [0,+\infty)$.
The power set of $A$ is $\P(A)$.
Set addition is $A + B = \{a + b~|~a \in A, b \in B\}$.
For a state $x$, its first (resp. second) time derivative is $\dot{x}$ (resp. $\ddot{x}$).
Euclidean space in $n$ dimensions is $\R^n$.
The $3$-dimensional special orthogonal group is $\SO(3)$, with Lie algebra $\mathfrak{so}(3)$.

\begin{defn}\label{def:box}
A \defemph{box} is a set $B(c,l,w,h) \subset \R^3$ defined by its  center $c \in \R^3$ and its length, width, and height $l,w,h, \in \R_+$, that can be written
\begin{align}
    B(c,l,w,h) = \{c\} + \left[-\frac{l}{2},\frac{l}{2}\right]\times\left[-\frac{w}{2},\frac{w}{2}\right]\times\left[-\frac{h}{2},\frac{h}{2}\right],\label{eq:box}
\end{align}
where $[\cdot,\cdot]$ is an interval on each axis of $\R^3$.
A \defemph{cube} is a box with all sides of equal length $l$, denoted $B(c,l)$.
\end{defn}

\section{Dynamic Models}\label{sec:models}

This section first specifies timing requirements for planning, and defines a fail-safe maneuver.
Next, it introduces the high-fidelity and trajectory-producing models used to apply RTD to a quadrotor.
Finally, it describes the robot's body.

RTD performs receding-horizon planning.
In each planning iteration, RTD first uses a \defemph{high-fidelity model} to estimate the robot's future position while it executes the current plan.
Then, RTD attempts to generate a new, safe plan starting from the future position estimate.
Planning with the high-fidelity model in real time is typically prohibitively computationally intensive, so RTD instead uses a lower-dimensional \defemph{trajectory-producing model} to generate a \defemph{desired trajectory} (also called a reference trajectory or plan).

To guarantee real-time operation, the robot must enforce a timeout on its online planning.
If a new plan cannot be found within this timeout, then the robot executes its previous plan.
To guarantee safe operation, RTD also requires that each desired trajectory incorporate a \defemph{fail-safe maneuver} that brings the robot to a safe state.
Then, if the robot cannot plan a new trajectory while executing the previous one, it can execute the remainder of its previous plan to come to a safe state.
We formalize these requirements as follows.
\begin{defn}\label{def:tau_plan_and_failsafe}
At each planning iteration, the robot has a \defemph{planning time} of $t\plan > 0$ in which to find a new, safe plan.
If no such plan is found, the robot is required to execute a collision-free \defemph{fail-safe maneuver} that brings it to a stationary hover.
\end{defn}
\noindent This fail-safe maneuver is reasonable because the quadrotor can land vertically from a hover.

\subsection{High-fidelity Model}
We denote the \defemph{high-fidelity model} as $f\hi: \T\times S \times U \to \R^{n\hi}$.
The \defemph{planning time horizon} is $\T = [t_0,\tfin]$.
Without loss of generality (WLOG), since we use receding-horizon planning, we let $t_0 = 0$ at the beginning of each planning iteration, so each planned trajectory is of duration $\tfin$.
The state space is $S = X\times V\times \Omega\times\SO(3)$ with state $s = (x,v,\om,R)$, where $x \in X \subset \R^3$ is position in the inertial frame; $v \in V \subset \R^3$ is velocity; $\om \in \Om \subset \R^3$ is angular velocity; and $R \in \SO(3)$ is attitude.
The inertial frame $X$ is spanned by unit vectors denoted $e_1$, $e_2$, and $e_3$ with $e_3$ pointing ``up'' relative to the ground, so $Re_3$ is the net thrust direction of the quadrotor's body-fixed frame.
We write the dynamics as per \cite{Lee2010_geom_ctrl_SE3}:
\begin{align}\begin{split}\label{eq:high-fidelity_model}
    \dot{x} &= v \\
    \dot{v} &= \tau Re_3 - m\mathrm{g}e_3 \\
    \dot{\om} &= J\inv\left(\mu - \om\times J\om\right) \\
    \dot{R} &= R\hat{\om},
\end{split}\end{align}
where $\hat{\cdot}: \R^3 \to \mathfrak{so}(3)$ is the \defemph{hat map} that maps a 3D vector to a skew-symmetric matrix \cite{Lee2010_geom_ctrl_SE3}.
The constant $\mathrm{g} = 9.81$ m/s\ts{2} is acceleration due to gravity.
The quadrotor's mass is $m \in \R$, and its moment of inertia matrix is $J \in \R^{3\times 3}$.
We assume $J$ is diagonal and constant, and write $J = \regtext{diag}(j_1,j_2,j_3)$.
The control input is $u = (\tau,\mu) \in U \subset \R^4$, where $\tau \in \R$ is net thrust and $\mu \in \R^3$ is the body moment; these inputs are related to rotor speeds as:
\begin{align}
    \begin{bmatrix} \tau \\ \mu \end{bmatrix} = 
    \begin{bmatrix} k_\tau & k_\tau & k_\tau & k_\tau \\
    0 & k_\tau\ell & 0 & -k_\tau\ell \\
    -k_\tau\ell & 0 & k_\tau\ell & 0 \\
    k_\mu & -k_\mu & k_\mu & -k_\mu
    \end{bmatrix}
    \begin{bmatrix} \om_{\regtext{rot},1}^2 \\ \om_{\regtext{rot},2}^2 \\ \om_{\regtext{rot},3}^2 \\ \om_{\regtext{rot},4}^2 \end{bmatrix},\label{eq:rotor_speed_conversion}
\end{align}
where $k_\tau$ and $k_\mu$ are rotor parameters, $\ell$ is the length from quadrotor center of mass to each rotor center, and $\om_{\regtext{rot},i}$ is the speed of the $i$\ts{th} rotor \cite{Powers2013_quadrotor_aero,Lee2010_geom_ctrl_SE3}.

\begin{assum}\label{ass:high-fidelity_model}
We assume commanded inputs can be achieved instantaneously (i.e., the rotor dynamics are fast compared to \eqref{eq:high-fidelity_model}), but that rotor speed is bounded (i.e., the inputs can saturate) \cite{Lee2010_geom_ctrl_SE3,mellinger2011_min_snap,mueller2015_quadrotor_splines}.
We also assume that the quadrotor has a \defemph{maximum speed} $\vmax > 0$ in any direction.
\end{assum}

\noindent We pick $\vmax = 5$ m/s, since aerodynamic drag can be compensated by rotor thrust up to $6$ m/s \cite{tal2018_INDI_control,hoffman2011_quadrotor_testbed}.
Note we are not concerned with model mismatch between the high-fidelity model and a real quadrotor.
However, RTD has been shown to handle model mismatch \cite{kousik2018_RTD_ijrr}.
We implement \eqref{eq:high-fidelity_model} with the specifications of an AscTec Hummingbird \cite{asctec_hummingbird,dong2015_hummingbird_specs} (see Table \ref{tab:sys_and_des_traj_params}).

\subsection{Trajectory-Producing Model}
\label{sec:models:trajectory-producing}

We use a trajectory-producing model that generates desired position trajectories with polynomials in time, generated separately in each coordinate of $X$, based on \cite{mueller2015_quadrotor_splines}, but modified so each trajectory has two piecewise polynomial segments, to include the fail-safe maneuver as in Definition \ref{def:tau_plan_and_failsafe}.
We first present a 1D model, then extend it to 3D.
Model parameters are in Table \ref{tab:sys_and_des_traj_params}.

Consider a 1D, twice-differentiable, desired position trajectory $p\des: \T\to \R$, with dynamics $f\oneD: \T\times K\oneD\to \R$:
\begin{align}
    \dot{p}\des(t;\kp) = f\oneD(t,\kp) = \frac{c_1(t,\kp)}{6}t^3 + \frac{c_2(t,\kp)}{2}t^2 + \kp_at + \kp_v,\label{eq:traj-prod_model_1D}
\end{align}
where the notation $p\des(t;\kp)$ indicates the trajectory parameterized by $\kp$.
We call $\kp = (\kp_v,\kp_a,\kp\peak) \in K\oneD \subset \R^3$ a \defemph{trajectory parameter}.
In particular, $\kp_a = \ddot{p}\des(0)$ is the initial desired acceleration, $\kp_v = \dot{p}\des(0)$ is the initial desired speed, and $\kp\peak$ is a desired \defemph{peak speed} to be achieved at a time $t\peak \in [t\plan,\tfin]$.
The values of $c_1, c_2$ are given by \cite[(64)]{mueller2015_quadrotor_splines} as
\begin{align}
    \begin{bmatrix}c_1(t,\kp) \\ c_2(t,\kp) \end{bmatrix} &= \frac{1}{(c_3(t))^3}\begin{bmatrix} -12 & 6c_3(t) \\ 6c_3(t) & -2(c_3(t))^2 \end{bmatrix}\begin{bmatrix} \Dl_v(t,\kp) \\ \Dl_a(t,\kp) \end{bmatrix},\\
    c_3(t) &= \begin{cases}
        t\peak &t \in [0,t\peak) \\
        \tfin - t\peak &t \in [t\peak,\tfin],
    \end{cases} \\
    \Dl_v(t,\kp) &= \begin{cases}
        \kp\peak - \kp_v - \kp_at\peak &t \in [0,t\peak) \\
        -\kp\peak &t \in [t\peak,\tfin],
    \end{cases} \label{eq:Dl_v}\\
    \Dl_a(t,\kp) &= \begin{cases}
        -\kp_a &t \in [0,t\peak) \\
        0 &t \in [t\peak,\tfin].
    \end{cases}\label{eq:Dl_a}
\end{align}
These dynamics produce a desired position trajectory that begins at the speed $\kp_v$ with acceleration $\kp_a$ at $t = 0$.
The trajectory accelerates to a speed of $\kp\peak$ at $t = t\peak$, at which point the desired acceleration is 0; the trajectory then slows down to desired speed and acceleration of $0$ at $t = \tfin$ (this is the fail-safe maneuver).
Notice that $c_3$, $\Dl_v$, and $\Dl_a$ are piecewise constant in $t$, with a jump discontinuity at $t\peak$.
Therefore, $c_1$ and $c_2$ are piecewise constant in $t$, which makes \eqref{eq:traj-prod_model_1D} a piecewise polynomial in time.
By construction, \eqref{eq:traj-prod_model_1D} and its derivative (acceleration) are continuous functions of time.
Note, a desired position trajectory can be translated arbitrarily, so we assume WLOG $p\des(0) = 0$.
Then, any desired position trajectory given by \eqref{eq:traj-prod_model_1D} is uniquely determined by $\kp$ for all $t \in \T$.

Note, we specify that $\kp_v, \kp_a$, and $\kp\peak$ lie in compact intervals $[\kp_v^-,\kp_v^+]$, $[\kp_a^-,\kp_a^+]$, and $[\kp\peak^-,\kp\peak^+]$, so $K\oneD$ is the Cartesian product of these three intervals.
The lower and upper bounds are reported in Table \ref{tab:sys_and_des_traj_params}.

We now make a 3D \defemph{trajectory producing model} by using the dynamics \eqref{eq:traj-prod_model_1D} for each dimension, and creating a larger parameter space $K = K\oneD\times K\oneD\times K\oneD \subset \R^9$.
For a trajectory $x\des: \T \to X$, we denote the dynamics as $f: T\times K \to \R^3$, so
\begin{align}
    x\des(t;k) = x\des(0) + \int_{\T} f(t,k)dt,\quad
    f(t,k) = \begin{bmatrix}
        f\oneD(t,\kp_1) \\
        f\oneD(t,\kp_2) \\
        f\oneD(t,\kp_3)
    \end{bmatrix},\label{eq:traj-prod_model}
\end{align}
with trajectory parameter $k = (\kp_1,\kp_2,\kp_3) \in K$, where each $\kp_i = (\kp_{v,i},\kp_{a,i},\kp\peaki)$ is the peak speed, initial speed, and initial acceleration in dimension $i = 1,2,3$.
As in the 1D case, WLOG we let $x\des(0) = 0$.
For notational purposes, let $k\peak = (\kp_{\regtext{pk},1},\kp_{\regtext{pk},2},\kp_{\regtext{pk},3})$ and similarly for $k_v$ and $k_a$.
Then $k = (k_v,k_a,k\peak)$ by reordering, and we denote $K = K_v \times K_a \times K\peak$.

By construction, \eqref{eq:traj-prod_model} includes the fail-safe maneuver specified by Definition \ref{def:tau_plan_and_failsafe}, and specifies what a \defemph{plan} is: at each planning iteration, the robot attempts to pick a new $k \in K$ that specifies a new desired trajectory $x\des$ to begin at $t\plan$.
We bound which $k$ can be chosen at each planning iteration.
First, per Assumption \ref{ass:high-fidelity_model}, speed is bounded: $\norm{k\peak}_2 \leq \vmax$.
Second, since $k\peak$ is a desired velocity and $k_v$ is the initial velocity, the quantity $\frac{1}{t\peak}\norm{k\peak - k_v}_2$ determines an approximate desired acceleration, leading to the following definition.
\begin{defn}\label{def:max_accel}
The \defemph{maximum desired acceleration} is $\amax > 0$.
We enforce a constraint at runtime that $\frac{1}{t\peak}\norm{k\peak - k_v}_2 \leq \amax$.
\end{defn}
\noindent Note that acceleration due to gravity is not included in the trajectory-producing model.
However, gravity is accounted for by the low-level controller we specify in Section \ref{subsec:low-level_controller}.

\subsection{The Robot as a Rigid Body}
The high-fidelity model \eqref{eq:high-fidelity_model} and trajectory-producing model \eqref{eq:traj-prod_model} only express the dynamics of the robot's center of mass.
However, for obstacle avoidance, one must consider the robot's entire body \cite{althoff2014online,kousik2018_RTD_ijrr}.
We do so as follows.

\begin{assum}\label{ass:rigid_body_box}
The robot is a rigid body that lies within a cube $\BQR = B(x(t),w) \subset X$ (centered at the robot's center of mass (COM) at any time with side length $w$).
We assume the box does not rotate, so it is large enough to contain the robot's body at any orientation.
We call this box the \defemph{body} of the robot.
\end{assum}
\noindent Though this is a conservative assumption, we find in Section \ref{sec:reachability} that it simplifies the computation of a reachable set for the robot's entire body, because we can first compute a reachable set of the robot's COM, then dilate the reachable set by $\BQR$.
Our quadrotor has dimensions of $0.54\times0.54\times0.0855$ m\ts{3} \cite{asctec_hummingbird,dong2015_hummingbird_specs}, so $w = 0.54$ m.
Note that $\ell = w/2$ is the distance from the COM to the center of each rotor.

\begin{table}[t]
\scriptsize
\centering
\caption{Implementation Parameters}
\begin{tabular}{c|c||c|c||c|c}
\multicolumn{2}{c||}{\textbf{Robot} \cite{asctec_hummingbird,dong2015_hummingbird_specs}} & \multicolumn{2}{c||}{\textbf{Control} \cite{mellinger2011_min_snap}} & \multicolumn{2}{c}{\textbf{Desired Traj.} \cite{mueller2015_quadrotor_splines}} \\
\hline
\textbf{Param.} & \textbf{Value} & \textbf{Param.} & \textbf{Value} & \textbf{Param.} & \textbf{Value} \\
\hline
$m$ & 0.547 kg & $G_x$ & $2.00 I_{3\times 3}$ & $t\plan$ & 0.75 s \\
$j_1, j_2$ & 0.0033 kgm\ts{2} & $G_v$ & $0.50 I_{3\times 3}$ & $t\peak$ & 1 s \\
$j_3$ & 0.0058 kgm\ts{2} & $G_R$ & $1.00 I_{3\times 3}$ & $\tfin$ & 3 s \\
$k_\tau$ & 1.5E-7 $\frac{\regtext{N}}{\regtext{rpm}^2}$ & $G_\om$ & $0.03 I_{3\times 3}$ & $\kp_v^\pm$ & $\pm5$ m/s \\
$k_\mu$ & 3.75E-9 $\frac{\regtext{Nm}}{\regtext{rpm}^2}$ & $\vmax$ & 5 m/s & $\kp_a^\pm$ & $\pm10$ m/s\ts{2} \\
$\ell$ & 0.27 m & $\amax$ & 3 m/s\ts{2} & $\kp\peak^\pm$ & $\pm5$ m/s \\
$\om_{\regtext{rot}}$ & 1100--8600 rpm & $\sensorhorizon$ & 12 m & &
\end{tabular}\label{tab:sys_and_des_traj_params}
\vspace*{-0.5cm}
\end{table}
\section{Tracking Error}\label{sec:tracking_error}

This section describes the low-level controller used to track desired trajectories; then defines tracking error as a set-valued, trajectory-dependent \defemph{tracking error function} $g$; and finally describes how to construct an approximation of $g$.
The purpose of $g$ is to include tracking error explicitly in the quadrotor's FRS (computed in Section \ref{sec:reachability}) for online planning (Section \ref{sec:online_planning}).

\subsection{Low-Level Controller}\label{subsec:low-level_controller}
Given any $k \in K$, the quadrotor uses a feedback controller $u_k: \T\times S \to U$ to track the trajectory parameterized by $K$.
For short, we say that $u_k$ \defemph{tracks $k$}.
This feedback controller can take any form, such as PID, LQR, or MPC; in this work, we use the PD controller specified in \cite[Section IV]{mellinger2011_min_snap}.
Recall that the quadrotor has states $s = (x,v,\om,R)$.
Consider a twice-differentiable desired position trajectory $x\des: \T\to\R$ as in \eqref{eq:traj-prod_model}.
Using the notation in \cite{mellinger2011_min_snap}, we specify a desired yaw $\psi(t) = 0$.
Then, $u(t)$ is uniquely determined by the current state $s(t)$, and the desired trajectory $x\des(t)$ and its derivatives, by leveraging differential flatness of the model \eqref{eq:high-fidelity_model} \cite{mellinger2011_min_snap,tal2018_INDI_control}.
At any time $t$, the state error used for feedback is
\begin{align}\begin{split}\label{eq:state_error}
    e_x(t) &= x(t) - x\des(t) \\
    e_v(t) &= v(t) - \dot{x}\des(t) \\
    e_R(t) &= \frac{1}{2}\left(R\des(t)^{\top}R(t) - R(t)^{\top}R\des(t)\right)^{\vee} \\
    e_{\om}(t) &= \om(t) - \om\des(t),
\end{split}\end{align}
where $(\cdot)^\vee: \mathfrak{so}(3) \to \R^3$ is the \defemph{vee map} that maps a skew-symmetric matrix to a 3D vector \cite{Lee2010_geom_ctrl_SE3}.
The desired control input $u_k(t,s(t)) = (\tau(t),\mu(t))$ is given by
\begin{align}\begin{split}\label{eq:fdbk_ctrl_u_k}
    \tau(t) &= \norm{-G_xe_x(t) - G_ve_v(t) + m\mathrm{g}e_3 + m\ddot{x}\des(t)}_2 \\
    \mu(t) &= - G_{\om}e_{\om}(t) -G_Re_r(t)
\end{split}\end{align}
where $R\des$ is found as in \cite[Section IV]{mellinger2011_min_snap} and $\om\des$ is found as in \cite[Section III]{mellinger2011_min_snap}.
In simulation, $\tau$ and $\mu$ are converted to rotor speeds and saturated using \eqref{eq:rotor_speed_conversion}.
The feedback gains and rotor speed saturation parameters are reported in Table \ref{tab:sys_and_des_traj_params}.

Note that, by including feedforward terms for angular acceleration and fulfilling other mild assumptions, one can modify \eqref{eq:fdbk_ctrl_u_k} to provably asymptotically drive tracking error to zero as time tends to infinity for any particular reference trajectory \cite{Lee2010_geom_ctrl_SE3}; however, since we are planning in a receding-horizon way, we find that \eqref{eq:fdbk_ctrl_u_k} tracks trajectories well over the time horizon $\T$ when commanding speeds up to $\vmax = 5$ m/s and $|\kp\peak - \kp_v| \leq 3$ m/s as in \eqref{eq:Dl_v}.
We express this notion of ``tracking well'' mathematically in the following subsections.
Loosely speaking, it means that $\norm{e_x(t)}_2 \leq 0.1$ m at any $t$.

\subsection{The Tracking Error Function}\label{subsec:defining_g}

Using the controller in \eqref{eq:fdbk_ctrl_u_k}, the quadrotor described by \eqref{eq:high-fidelity_model} cannot perfectly track trajectories produced by \eqref{eq:traj-prod_model}.
We call the position error term $e_x$ from \eqref{eq:state_error} the \defemph{tracking error}.
As shown in the literature, RTD can bound tracking error and incorporate it into a robot's FRS, which can then be used to plan safe trajectories \cite{kousik2018_RTD_ijrr,vaskov2019_ACC}.
Doing so requires the following assumption.

\begin{assum}\label{ass:compact_sets_lipschitz_dyn}
The sets $\T, S$, and $K$ are compact.
The high-fidelity model \eqref{eq:high-fidelity_model} is Lipschitz continuous in $t, s$, and $u$.
\end{assum}
\noindent Also notice that the desired position trajectory produced by \eqref{eq:traj-prod_model} is Lipschitz continuous in $t$ and $k$ because it is a piecewise polynomial on a compact domain.
Let $\idx: S \to X$ project points from $S$ to $X$ via the identity relation.
Now, we treat the tracking error as follows.

\begin{assum}\label{ass:tracking_error}
Suppose $s_0 \in S$ is an initial condition for \eqref{eq:high-fidelity_model} such that $\idx(s_0) = 0$.
Let $s: \T\to S$ be a trajectory of \eqref{eq:high-fidelity_model} beginning from $s_0$.
Let $t\in\T$, and let $k \in K$ be arbitrary but obeying Definition \ref{def:max_accel}.
Let $x\des: \T\to X$ be a trajectory of \eqref{eq:traj-prod_model} and recall $x\des(0) = 0$ WLOG.
We assume there exists a set-valued \defemph{tracking error function} $g: \T\times K \to \P(\R^3)$ for which
\begin{align}
    \idx(s(t; s_0,k)) \in \{x\des(t; k)\} + g(t,k).\label{eq:tracking_error_g}
\end{align}
We assume every $g(t,k)$ is compact.
\end{assum}
\noindent Note that $g(t,k)$ being compact is reasonable since $K$ is compact and the dynamics \eqref{eq:high-fidelity_model} are continuous, so the quadrotor cannot diverge infinitely far from any desired trajectory.
By Assumption \ref{ass:tracking_error}, for any desired trajectory, we can dilate the trajectory with $g$ to check if the high-fidelity model can collide with obstacles in $X$.
However, we instead combine $g$ with the FRS computed in Section \ref{sec:reachability}.
Then, we use the FRS to map obstacles to trajectories that are unsafe for the high fidelity model; so, instead of checking for collisions ``forward'' from trajectory space to $X$, we map obstacles ``backwards'' to eliminate unsafe trajectories.
Next, we implement an approximation of $g$.

\subsection{Implementation}\label{subsec:finding_g}

Computing $g$ as in \eqref{eq:tracking_error_g} is difficult in general for nonlinear systems such as \eqref{eq:high-fidelity_model} with more than 6 dimensions \cite{kousik2018_RTD_ijrr,herbert2017fastrack}.
Instead, we approximate $g$ by computing a function $\gapp: T \times K \to \P(\R^3)$ via sampling.
To justify our approach, we first compare the high-fidelity model \eqref{eq:high-fidelity_model} to a linear system (such comparisons are common for near-hover quadrotors \cite[Section 5.1]{hoffman2011_quadrotor_testbed}), plus other simplifications to make sampling tractable.
Then, we compute $\gapp$ with Algorithm \ref{alg:compute_gapp}.

\subsubsection{Simplifications to Enable Approximation}
A commonly-used approximation in quadrotor literature is that the rotational dynamics can be controlled on a faster time scale than the translational dynamics \cite{Michael2010_grasp_quadrotor,tal2018_INDI_control}.
So, to understand tracking error, we first suppose that the quadrotor's attitude $R$ is fixed.
Then, the dynamics become a linear system in each translational dimension, indexed by $i$, with control input $\tau$:
\begin{align}
    \begin{bmatrix} \dot{x}_i \\ \dot{v}_i \end{bmatrix} = \begin{bmatrix} v_i \\ (\tau (Re_3))\cdot e_i, \end{bmatrix}.
\end{align}
Recall that the quadrotor has a maximum speed of $\vmax$, so all possible initial velocities $k_v$ lie in the box $[-\vmax,\vmax]^3$.
Therefore, the initial speed in any direction is in the compact interval $[-\vmax,\vmax]$.
The position tracking error between a desired trajectory and a double integrator's executed trajectory, under linear feedback, is maximized when initial velocity is at the boundary of a compact interval:

\begin{prop}\label{prop:tracking_error_max}
Let $p \in \R$ be a state describing a 1D position, with dynamics $\ddot{p} = u$ and input $u \in \R$.
Let $p\des: \T \to \R$ be a twice-differentiable desired trajectory.
Suppose that $p(0) = p\des(0) = 0$, and $\dot{p}\des(0) \in \R$.
Suppose that $\dot{p}(0) \in [\dot{p}_0^-, \dot{p}_0^+] \subset \R$.
Let the input be given by linear feedback as
\begin{align}
    u(t) = \ddot{p}\des(t) + \kp_{\regtext{p}}(p(t) - p\des(t)) + \kp_{\regtext{d}}(\dot{p}(t) - \dot{p}\des(t)),
\end{align}
where $\kp_{\regtext{p}}, \kp_{\regtext{d}} \in \R$ are gains that can be chosen freely.
Then, for any $t \in \T$, the quantity $|p(t) - p\des(t)|$ is maximized when $\dot{p}(0) = \dot{p}_0^-$ or $\dot{p}(0) = \dot{p}_0^+$.
\end{prop}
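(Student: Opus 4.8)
The plan is to reduce the problem to the dynamics of the tracking error itself, which turn out to be linear and homogeneous. Define the error $e(t) = p(t) - p\des(t)$. Differentiating twice and substituting the feedback law for $u = \ddot{p}$, the feedforward term $\ddot{p}\des(t)$ cancels exactly, leaving
\begin{align}
    \ddot{e}(t) - \kp_{\regtext{d}}\dot{e}(t) - \kp_{\regtext{p}}e(t) = 0. \label{eq:err_dyn}
\end{align}
This is the crucial observation: the desired trajectory enters \eqref{eq:err_dyn} only through the initial conditions, not as a forcing term. The initial conditions are $e(0) = p(0) - p\des(0) = 0$ (fixed, by hypothesis) and $\dot{e}(0) = \dot{p}(0) - \dot{p}\des(0)$, which depends affinely on the free initial velocity $\dot{p}(0)$.

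Next I would exploit the linearity of \eqref{eq:err_dyn}. Because the error ODE is a homogeneous linear system and the initial position error is always zero, its solution is linear in the single remaining degree of freedom $\dot{e}(0)$: there is a scalar function $\psi: \T \to \R$, depending only on $t$, $\kp_{\regtext{p}}$, and $\kp_{\regtext{d}}$ (but not on $\dot{p}(0)$ nor on $p\des$), such that $e(t) = \psi(t)\,\dot{e}(0)$. One can either invoke the superposition principle for linear ODEs directly, or exhibit $\psi$ explicitly from the characteristic roots $\lambda_{1,2}$ of \eqref{eq:err_dyn}; I would prefer the former to avoid a case analysis over real-distinct, repeated, and complex-conjugate roots.

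Finally, I would fix an arbitrary $t \in \T$ and write
\begin{align}
    |p(t) - p\des(t)| = |\psi(t)|\cdot\left|\dot{p}(0) - \dot{p}\des(0)\right|.
\end{align}
Since $\dot{p}\des(0)$ is a fixed constant and $|\psi(t)|$ does not depend on $\dot{p}(0)$, maximizing the left-hand side over $\dot{p}(0) \in [\dot{p}_0^-,\dot{p}_0^+]$ amounts to maximizing the convex map $\dot{p}(0)\mapsto|\dot{p}(0) - \dot{p}\des(0)|$ over a compact interval, whose maximum is attained at an endpoint. Hence $|p(t)-p\des(t)|$ is maximized at $\dot{p}(0) = \dot{p}_0^-$ or $\dot{p}(0) = \dot{p}_0^+$, as claimed.

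I do not expect a serious obstacle here. The one step that deserves care is the cancellation of the feedforward term $\ddot{p}\des$ that produces the homogeneous form \eqref{eq:err_dyn}; everything downstream is a clean consequence of linearity plus convexity of the absolute value. The only presentational pitfall is stating the linearity argument for $e(t) = \psi(t)\dot{e}(0)$ abstractly, rather than getting bogged down in solving the ODE across the separate cases of the characteristic roots.
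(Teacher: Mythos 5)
Your proof is correct and takes essentially the same route as the paper's: both reduce the problem to the homogeneous linear error dynamics (the paper writes them in first-order form $\dot z = Az$ and reads off the entry $a_{12}(t)$ of $e^{At}$, which is exactly your $\psi(t)$), use $e(0)=0$ to conclude the error at time $t$ is linear in $\dot e(0)$, and then maximize over the compact interval of initial velocities to land on an endpoint. If anything, your superposition phrasing is marginally cleaner, since it sidesteps the paper's unnecessary side condition of picking gains so that $a_{12}\neq 0$ (when $\psi(t)=0$ the claim holds trivially).
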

\begin{proof}
Denote $z = (z_1,z_2)$ as the error system with
\begin{align}
    \dot{z}(t) = \begin{bmatrix}
    z_1(t) \\ z_2(t)
    \end{bmatrix} = \begin{bmatrix}
    p(t) - p\des(t) \\ \dot{p}(t) - \dot{p}\des(t)
    \end{bmatrix},\quad \dot{z}(0) = \begin{bmatrix}
    0 \\ \dot{p}(0) - \dot{p}\des(0)
    \end{bmatrix}.
\end{align}
We can rewrite this as
\begin{align}
    \dot{z} = \begin{bmatrix} 0 & 1 \\ \kp_{\regtext{p}} & \kp_{\regtext{d}}\end{bmatrix}z = Az,
\end{align}
which is an autonomous linear system with the solution
\begin{align}
    z(t) = e^{At}z(0) = \begin{bmatrix} a_{11} & a_{12} \\ a_{21} & a_{22}\end{bmatrix}z(0),
\end{align}
Pick $\kp_{\regtext{p}}, \kp_{\regtext{d}}$ so that $a_{12} \neq 0$.
Then
\begin{align}
    |z_1(t)| = |p(t) - p\des(t)| = |a_{12}(\dot{p}(0) - \dot{p}\des(0))|
\end{align}
is maximized when $\dot{p}(0) = \dot{p}_0^-$ or $\dot{p}(0) = \dot{p}_0^+$.
\end{proof}

Now, we build off of Proposition \ref{prop:tracking_error_max} to justify sampling, by discussing the relative influence of different states and parameters on tracking error.
In \eqref{eq:state_error}, $e_x$ is determined by the initial condition $s_0 \in S$ of the quadrotor and the choice of parameter $k \in K$.
Notice that not all $k$ are feasible (i.e., can be chosen as a new plan) for a given $s_0$; $k_v$ is determined by the initial velocity, and $k_a$ is determined by the current thrust $\tau$, attitude $R_0$, and gravity.
Similarly, given any $k$, only a subset of possible $s_0$ result in $k$ being feasible to the constraints of $\vmax$ and $\amax$.
Therefore, informally, we can think of $g(t,k)$ as the output of the following program:
\begin{align}
    g(t,k) = \underset{A}{\regtext{argsup}}\hspace{0.2cm}&\vol(A)\\
        \regtext{s.t}\hspace{0.5cm}&A = \{e_x(t;k)~|~s_0~\regtext{feas. to}~k\}.
\end{align}
This formulation for $g$ suggests that, to approximate $g$, we must compute $e_x(t;k)$ for every feasible combination of $(t,k,s_0)$ in $T\times K \times S$, which is a 22-dimensional space (if $R \in \SO(3)$ is represented with three Euler angles).
We reduce the size of the space to search with the following simplifications.
First, recall that the rotational dynamics operate on a much faster time scale than the translational dynamics.
Second, by Assumption \ref{ass:high-fidelity_model}, the net thrust $\tau$ is achieved instantaneously.
Therefore, any desired acceleration is achieved on a faster timescale than a desired velocity; in other words, tracking error is primarily caused by the velocity parameters $k_v$ and $k\peak$.
Furthermore, for any $k_v$, all feasible $k\peak$ are determined by $k_v$ and $\amax$ as in Definition \ref{def:max_accel}.
Therefore, we approximate $g$ by fixing $k_a$ and evaluating $e_x$ on points in the \defemph{tracking error subdomain} $D = T\times K_v$ of $g$.

Though $D$ is 4-dimensional, we want to compute $\gapp$ using the 12D nonlinear quadrotor model.
We begin by covering $D$ with a collection of subsets, then sampling to find maximum tracking error on each subset, illustrated in Figure \ref{fig:tracking_error_sampling}.
We denote the cover $\Dcover = \{D\upj\}_{j=1}^{n_{\Dcover}}$, where $n_{\Dcover} \in \N$.
To find $\gapp$, we first define maximum tracking error on elements of $\Dcover$ as follows.
\begin{defn}\label{def:max_trk_err_on_D_j}
Let $D\upj \in \Dcover$, and recall $D\upj \subset T\times K_v$.
Suppose $(t,k_v) \in D\upj$ and $k = (k_v,k_a,k\peak) \in K$ with $k\peak$ feasible to $k_v$.
Suppose $s_0$ is any initial condition of the quadrotor $s_0 \in S$ that is feasible given $k$.
The \defemph{maximum tracking error on $D\upj$} is a set $E\upj \subset \R^3$ such that $e_x(t;k) \in E\upj$.
\end{defn}
\noindent Now, we construct each element $D\upj \in \Dcover$ so that we can approximate $E\upj$ while only measuring $e_x$ at a finite number of points in $D\upj$.
Recall that, by Proposition \ref{prop:tracking_error_max}, for a given desired trajectory, the tracking error at any time is maximized at the ends of an interval of possible initial speeds, since the feedback law $u_k$ given by \eqref{eq:fdbk_ctrl_u_k} is as in the proposition when $R$ is fixed.
Therefore, we construct each $D\upj \subset \T\times K_v$ using intervals:
\begin{align}\label{eq:D_j}
    D\upj &= T\upj\times K_v\upj = \left[t_-\upj, t_+\upj\right] \times B\left(v\upj, \dl_v\upj\right),
\end{align}
where $v\upj \in K_v$ is the center of the cube $K_v\upj = B\left(v\upj, \dl_v\upj\right)$ of side length $\dl_v\upj$ as in Definition \ref{def:box}.

\subsubsection{Approximating the Tracking Error}
We compute $\gapp$ with Algorithm \ref{alg:compute_gapp}.
Consider a single $D\upj \in \Dcover$ (we iterate through the $D\upj$ starting on Line \ref{lin:for_D_j}).
Since $K_v\upj \subset K_v$ is a cube, it has eight vertices, in the set $N_v\upj \subset K_v\upj$, that are returned by \texttt{GetVertices} (Line \ref{lin:get_vertices}).
We iterate through each vertex (Line \ref{lin:for_vertex}) and compute tracking error.
To understand why, recall Proposition \ref{prop:tracking_error_max}; given any $t \in \T\upj$, the tracking error over all of $D\upj$ is largest at the vertices of $K_v\upj$.
We do not know a priori what $t \in \T\upj$ is likely to maximize the tracking error.
However, given any $k \in K$, we can forward-integrate the high-fidelity model to approximate $e_x(t;k)$ on $\T$, then take the maximum of $e_x(t;k)$ over $t \in T\upj$, since it is a compact interval.
So, given any $D\upj$, we only need to pick which $k\upn \in K$ to use, where $n = 1,\cdots,n\upj$ indexes the samples and $n\upj \in \N$.

To pick the samples $k\upn = (k_v\upn,k_a\upn,k\peak\upn)$, first recall the simplification where any desired acceleration can be achieved on a much faster timescale than any desired velocity or position.
Therefore, we specify $k_a\upn = 0$.
Also, recall that we are trying to maximize tracking error, and $k_a$ is set to the quadrotor's estimated acceleration at the beginning of each planning iteration (see Section \ref{sec:online_planning} Algorithm \ref{alg:online_planning}).
Therefore, for any $k_v$ and $k\peak$, the tracking error is likely to be larger for a trajectory with $k_a = 0$ as opposed to one with $k_a$ as the estimated acceleration; i.e., this simplification is conservative.

Now recall that $k_v\upn$ is specified by each vertex of $D\upj\setminus \T\upj$.
So, it is left to choose $k\peak\upn$.
To pick the $k\peak\upn$ samples, notice that the simplifying assumptions attempt to decouple rotation and acceleration from translation.
Up to this point, these assumptions have been useful for reducing the problem size.
But, we want to pick $k\peak\upn$ to maximize tracking error; therefore, for any $k_v\upn$, we want to choose $k\peak\upn$ to encourage the coupling of rotation, acceleration, and translation.
Since the cube $K_v\upj$ is aligned with the inertial frame axes, at any vertex $k_v\upn \in N_v\upj$, we can encourage coupling by choosing $k\peak\upn$ in the eight directions $(\pm 1,\pm 1, \pm 1)$.
In other words, for each vertex of $K_v\upj$, we create a box $K\peakj$ and sample its vertices:
\begin{align}
    k\peak\upn = b\upn\begin{bmatrix} \pm 1 \\ \pm 1 \\ \pm 1\end{bmatrix} + k_v\upn\label{eq:sample_kpeak}
\end{align}
where $b\upn \in \R$ is chosen as large as possible for each $k\peak\upn$ to obey the constraints
\begin{align}
    \frac{1}{t\peak}\norm{k\peak\upn - k_v\upn}_2 \leq \amax\quad\regtext{and}\quad\norm{k\peak\upn}_2 \leq \vmax.\label{eq:vmax_and_amax_cons}
\end{align}
The $k\peak\upn$ in \eqref{eq:sample_kpeak} are returned by \texttt{GetFeasPeakVels} (Line \ref{lin:get_feas_kpeak}).

By the procedure above, we sample eight $k\peak\upn$ for each of the eight $k_v\upn$, expressed by the inner for-loop (Line \ref{lin:for_kpeak}).
This results in $n\upj = 64$ samples $k\upn = (k\peak\upn,k_v\upn,0)$ (recall that $k_a\upn = 0$) to evaluate for each $D\upj$.
Denote the set of samples $K_{\regtext{sample}}\upj = \{k\upn\}_{n = 1}^{64}$.
For each $k\upn \in K_{\regtext{sample}}\upj$, we numerically forward-integrate the high-fidelity model to approximate $e_x(t;k\upn)$ on $\T\upj$.
This returns $e_x$ at the discrete set of times $\{t\upm\}_{m=1}^{m_t} \subset \T\upj$ such that $[t^{(1)}, t^{(m_t)}] = \T\upj$, where $m_t \in \N$.
Then, we approximate $E\upj$ as
\begin{align}
    E\upj \approx \texttt{convhull}\left(\bigcup_{t\upm \in T\upj}\left\{e_x(t\upm,k\upn) \mid k\upn \in  K_{\regtext{sample}}\upj\right\}\right) \subset \R^3.\label{eq:convhull_approx_E_j}
\end{align}
This approximate $E\upj$ is found for each $D\upj$; we store the pairs $(D\upj,E\upj)$ as a lookup table to represent $\gapp$.
So, if $(t,k_v) \in D\upj$, then $\gapp: T\times K \to \P(\R^3)$ is given by
\begin{align}
    \gapp(t,k) = E\upj,\label{eq:gapp_defn}
\end{align}
where $k = (k_v,k_a,k\peak)$ for any $k\peak \in K\peak$ that is feasible to $k_v$ and $\amax$, and with $k_a \in K_a$.

\subsubsection{Implementation Details}
We implement Algorithm \ref{alg:compute_gapp} with each $T\upj$ of duration $0.02$ s and each $K_v\upj$ of side length $0.7$ m/s, so $|\Dcover| = 102,900$.
Running Algorithm \ref{alg:compute_gapp} takes 0.8 hrs on a 3.1 GHz laptop in MATLAB, and produces a 8.6 MB lookup table.
The quadrotor dynamics are simulated as explained in Section \ref{subsec:sim_implementation}.

Next, in Section \ref{sec:reachability}, we compute an FRS of the trajectory-producing model \eqref{eq:traj-prod_model}.
We combine the FRS with the tracking error given by $\gapp$ to do online planning in Section \ref{sec:online_planning}.

\begin{figure}
    \centering
    \includegraphics[width=0.9\columnwidth]{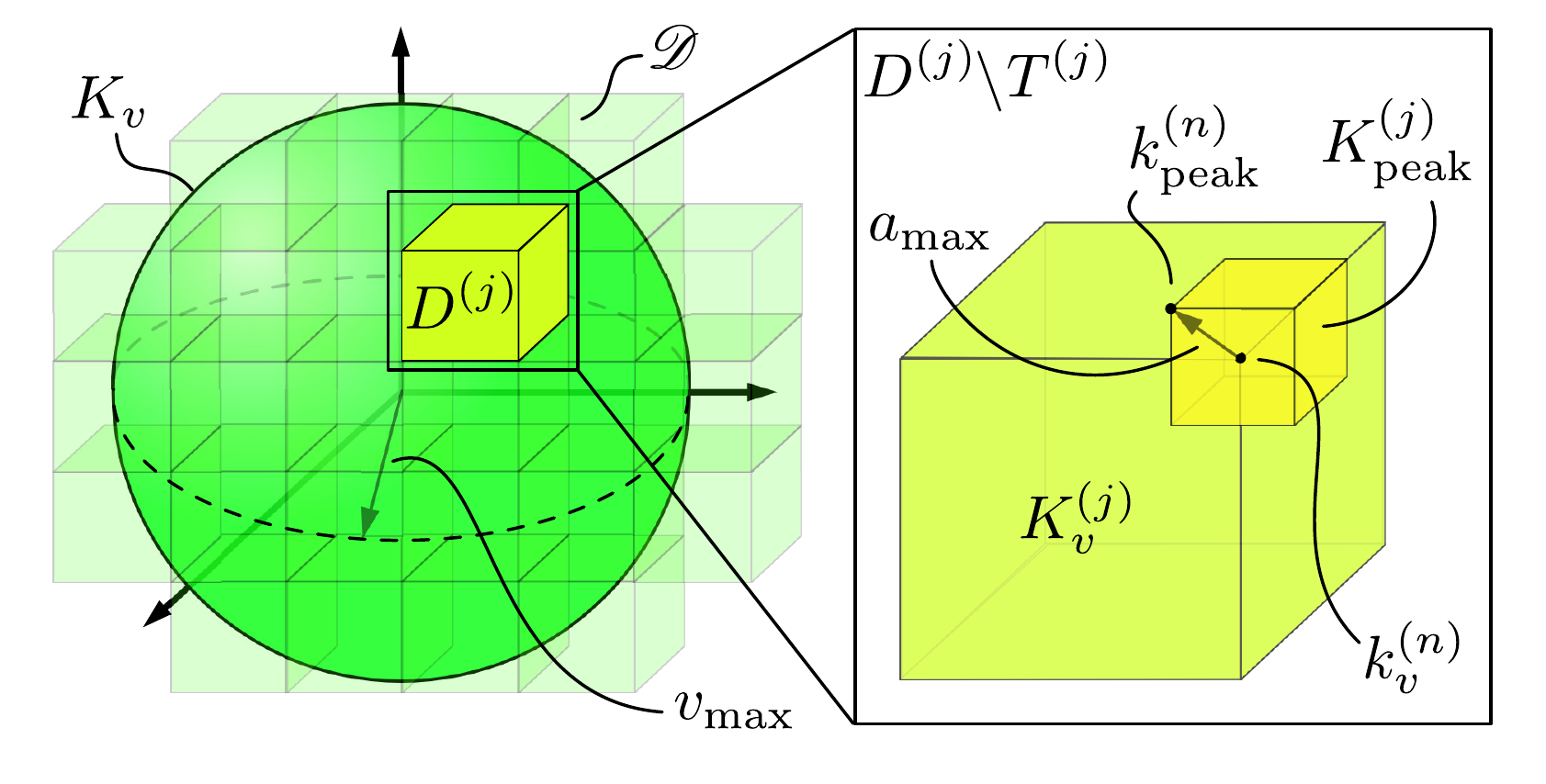}
    \caption{Sampling velocities in the space $K_v\times K\peak$ as in Algorithm \ref{alg:compute_gapp}.
    On the left, the ball $K_v \subset \R^3$ (in green) of initial velocities (radius $\vmax$) is covered by $\Dcover$ (light green) which contains boxes $D\upj \subset T\times K_v$; the time dimension is not shown here.
    Each $D\upj$ (in lime green) contains a box of initial velocities $K_v\upj$ (on the right), which has eight vertices, each of which is used as an initial speed sample.
    For each initial speed sample $k_v\upn$, eight possible peak speeds $k\peak\upn$ are sampled from a box that is a subset of $K\peak$ (the small yellow box) centered at $k_v\upn$; the samples $k\peak\upn$ are constrained as in \eqref{eq:vmax_and_amax_cons}.
    So, for each $D\upj$, there are 64 sampled initial/peak velocities; the tracking error is measured over all of the desired trajectories produced by these samples (with initial acceleration $k_a = 0$).}
    \label{fig:tracking_error_sampling}
\end{figure}

\begin{algorithm}[t]
\small
\begin{algorithmic}[1]
    \State {\bf Require:} $\Dcover$, $f\hi$ as in \eqref{eq:high-fidelity_model}, $f$ as in \eqref{eq:traj-prod_model}, $u_k$ as in \eqref{eq:fdbk_ctrl_u_k}
    
    \State {\bf Initialize:} $E\upj \leftarrow 0$ for $j = 1,\cdots,n_{\Dcover}$
    
    \State{\bf For:} $D\upj \in \Dcover$ as in \eqref{eq:D_j}\label{lin:for_D_j}

    \State\hspace{0.2in} $N_v\upj \leftarrow \texttt{GetVertices}\left(D\upj\setminus \T\upj\right)$ // vertices of $K_v\upj$\label{lin:get_vertices}
    
    \State\hspace{0.2in}{\bf For:} $k_v \in N_v\upj$\label{lin:for_vertex}
    
    \State\hspace{0.4in} $N\peakj \leftarrow \texttt{GetFeasPeakVels}\left(k_v\right)$\label{lin:get_feas_kpeak}
    
    \State\hspace{0.4in}{\bf For:} $k\peak \in N\peakj$\label{lin:for_kpeak}
    
    \State\hspace{0.6in}{\bf Reset state} $(x,v,\om,R) \leftarrow \left(0,k_v,0,I_{3\times 3}\right)$\label{lin:reset_quadrotor}
    
    \State\hspace{0.6in}{\bf Compute} $e_x\left(t; k\peak\right)$ as in \eqref{eq:state_error}\label{lin:compute_error}
    
    \State\hspace{0.6in}{\bf Update} $E\upj \leftarrow \texttt{convhull}\left(E\upj \cup \{e_x(t) \mid t \in \T\upj\}\right)$\label{lin:update_E_j}
    
    \State\hspace{0.4in}{\bf End}
    
    \State\hspace{0.4in}{\bf Store} $(D\upj, E\upj)$\label{lin:store_data}
    
    \State\hspace{0.2in}{\bf End}
    
    \State{\bf End}
\end{algorithmic}
\caption{\small Tracking Error Function Approximation (Offline)}
\label{alg:compute_gapp}
\end{algorithm}

\section{Reachability Analysis}\label{sec:reachability}

To produce safe trajectory plans, RTD first performs an offline FRS computation, then uses the FRS for online planning.
This section explains the offline FRS computation.
Given the time horizon $\T$, we define the \defemph{exact FRS} $\FRSexact \subset X \times K$ as all points in $X$ that are reachable by the trajectory-producing model \eqref{eq:traj-prod_model} plus tracking error:
\begin{align}\begin{split}\label{eq:FRS_defn}
    \FRSexact = \Big\{(x,k) \in X\times K\ \mid\ &\exists\ t\in\T\ \regtext{s.t.}\ x \in \{\tilde{x}(t)\}+g(t,k), \\
    &\dot{\tilde{x}}(t,k) = f(t,k),\,\regtext{and}\,\tilde{x}(0) = 0\Big\}.
\end{split}\end{align}
Prior work on RTD used SOS programming to compute the FRS \cite{kousik2017safe,kousik2018_RTD_ijrr,vaskov2019_ACC,vaskov2019_RSS}.
In this work, we instead use zonotope reachability via the CORA toolbox \cite{althoff2015_cora} for FRS computation, because we found it performs well for the 13D trajectory-producing model \eqref{eq:traj-prod_model}.
This section describes how to use CORA to find a \defemph{computed FRS} $\FRS$ that conservatively approximates the FRS of the trajectory-producing model \eqref{eq:traj-prod_model} \cite{althoff2008_conservative}:
\begin{align}
    \FRS \supseteq \Big\{(x,k) \in X\times K\ \mid\ &\exists\ t\in\T\ \regtext{s.t.}\ x(t) = \tilde{x}(t) \\
    &\regtext{and}\ \dot{\tilde{x}}(t,k) = f(t,k)\Big\}.\label{eq:FRS_conservative}
\end{align}
Then, per Assumption \ref{ass:tracking_error}, if the tracking error function $g$ is added to the computed FRS $\FRS$, one can conservatively over-approximate the exact FRS $F$ in \eqref{eq:FRS_defn}.
Conservatism of the computed FRS is necessary to ensure safety; if no trajectories of the computed FRS plus tracking error hit an obstacle, then no trajectories of the exact FRS can hit that obstacle.
Further, since the desired trajectories are only for the center of mass of the robot, we add the size of the robot's body $B\QR$ to $\FRS$ as well.
In this section, we compute $\FRS$.
We add $g$ and $B\QR$ in Section \ref{sec:online_planning}.

\subsection{Zonotopes}
CORA represents sets using zonotopes.
A zonotope $Z$ is a polytope in $\mathbb{R}^n$ that is closed under linear maps and Minkowski sums \cite{althoff2015_cora}, and is parameterized by its center $c \in \mathbb{R}^n$ and generators $g^{(1)}, ... g^{(p)} \in \mathbb{R}^n$.
A zonotope describes the set of points that can be written as the center $c$ plus a linear combination of the generators, where the coefficient $\beta^{(i)}$ on each generator must be between $-1$ and $1$:
\begin{align}
\label{eq:zono_long}
Z = \left\lbrace  y \in \mathbb{R}^n \ \Big| \ y = c + \sum_{i = 1}^{p} \beta^{(i)} g^{(i)},\ -1 \leq \beta\upi \leq 1 \right\rbrace 
\end{align}
For convenience, we concatenate the generators into an $n \times p$ \defemph{generator matrix} $G$, and the coefficients into a \defemph{coefficient vector} $\beta$:
\begin{align}
    G = \left[ g^{(1)}, g^{(2)}, ..., g^{(p)} \right]~\regtext{and}~\beta = [\bt^{(1)}, \bt^{(2)}, \cdots, \bt^{(p)}]^\top.
\end{align}
We can then rewrite \eqref{eq:zono_long} as
\begin{align}
    Z = \left\lbrace  y \in \mathbb{R}^n \ | \ y = c + G \beta,\ -1 \leq \beta \leq 1 \right\rbrace,
\end{align}
where $\geq$ and $\leq$ are applied elementwise.
From here on, for brevity we will assume that $\bt \in [-1,1]$, and will write the constraints explicitly when this is not true.

Boxes can be exactly represented as zonotopes:
\begin{align}
    B(c, l, w, h) = \left\lbrace  y \in \mathbb{R}^3\ \Big|\ \ y = c + \diag \bigg( \frac{l}{2}, \frac{w}{2}, \frac{h}{2} \bigg) \ \beta \right\rbrace
\end{align}
where $c$, $l$, $w$, and $h$ refer to the center, length, width, and height (Definition \ref{def:box}).
From here on, boxes and their zonotope representations are used interchangeably.
We define addition of a zonotope $Z \subset \R^3 \times \R^n$ with a box $B \subset \R^3$ as follows:
\begin{align}
    Z + B = Z + \left\lbrace y \in \R^3 \times \R^n \ \Big| \ y = \begin{bmatrix} y_B \\ 0_{1 \times n} \end{bmatrix}, y_B \in B \right\rbrace.
\end{align}
This is useful, e.g., when a zonotope is defined over the position and parameter spaces, but we want to add a box defined only in position space (note, this assumes WLOG that the first three rows of the zonotope $Z$'s center $c$ and generator matrix $G$ correspond to the quadrotor's position coordinates).

\subsection{Implementation}

Since the 3D trajectory-producing model uses the same 1D dynamics \eqref{eq:traj-prod_model_1D} separately in each dimension, we begin by computing the FRS for the 1D trajectory-producing model.
Then, we combine three 1D FRSes to create a single 3D FRS.

\subsubsection{1D Trajectory-Producing FRS}\label{subsec:1DFRS}
Recall that the 1D trajectory-producing model's position $p\des(t, \kp)$ depends only on time and the trajectory parameter $\kp = (\kp_v,\kp_a,\kp\peak) \in K\oneD \subset \R^3$.
Therefore, we want to compute the set of all positions that can be reached given a time interval $T$ and parameter set $K\oneD$.

CORA represents the FRS as a zonotope at each of a finite collection of compact \defemph{time steps} that are intervals in $\T$.
With a minor abuse of notation, we let $t$ act as an index, so that $Z\oneDt$ denotes the zonotope describing the 1D FRS over the time step containing $t$.
Note that each $Z\oneDt$ is a subset of the 1D position space $X_i$ (where $i = 1, 2, 3$) and parameters $K\oneD$:
\begin{align}
    Z\oneDt \subseteq X_i \times K\oneD.
\end{align}
CORA works by first linearizing the system dynamics at the beginning of each time step about the center of $Z\oneDt$, and obtaining the zonotope for the next time step by multiplying $Z\oneDt$ by an over-approximation of the matrix exponential over that time step.
CORA also accounts for linearization error; since the trajectory producing quadrotor model \eqref{eq:traj-prod_model_1D} does not depend on state, this error remains small in practice.

The 1D computed FRS $\FRS\oneD \subseteq X_i \times K\oneD$, $i = 1, 2, 3$, is the union of the zonotopes defined at each time step:
\begin{align}
    \FRS\oneD = \bigcup_{t \in T} Z\oneDt.
\end{align}

CORA requires specifying an initial set and dynamics to compute the FRS.
We treat $\kp \in K\oneD$ as states, and define the initial set $Z\oneDo$ as:
\begin{align}
\label{eq:zono_init}
Z\oneDo = \left\lbrace  y \in \mathbb{R}^4 \ | \ y = 0_{4 \times 1} + \diag \big( 0, \kp_v^+, \kp_a^+, \kp\peak^+ \big) \beta \right\rbrace,
\end{align}
where the first dimension is position, so initial position $p\des(0)$ is at the origin WLOG.
The dynamics given to CORA are $\dot{p}\des$ as in \eqref{eq:traj-prod_model_1D}, and $\dot{\kp}\peak, \dot{\kp}_v, \dot{\kp}_a = 0$ (since parameters are constant over a trajectory).

\subsubsection{3D Trajectory-Producing FRS}\label{subsec:3DFRS}
As described in \eqref{eq:traj-prod_model}, 3D trajectories of the trajectory-producing model can be constructed by concatenating 1D trajectories.
A similar process is followed to construct the 3D FRS $\FRS$ by concatenating 1D FRSes $\FRS\oneD$.
Specifically, given $Z\oneDt$ with center $c\oneDt$ and generator matrix $G\oneDt$, we construct $Z\indext$ as:
\begin{align}
    \label{eq:gen_matrix}
    Z\indext = \left\lbrace  y \in \mathbb{R}^{12} \ \Big| \ y = \begin{bmatrix}
    c\oneDt \\
    c\oneDt \\
    c\oneDt
    \end{bmatrix} + \begin{bmatrix}
    G\oneDt & 0_{4 \times p} & 0_{4 \times p} \\
    0_{4 \times p} & G\oneDt & 0_{4 \times p} \\
    0_{4 \times p} & 0_{4 \times p} & G\oneDt \\
    \end{bmatrix}\beta \right\rbrace, 
\end{align}
where $p$ is the number of generators in the matrix $G\oneDt$.
The 3D FRS $\FRS \subseteq X \times K_v \times K_a \times K\peak$ of the quadrotor's COM position and control parameters is then the union of $Z\indext$ through time.
\begin{align}
    \label{eq:FRSzono}
    \FRS = \bigcup_{t \in T}Z\indext.
\end{align}
\noindent Notice that $\FRS$ represents a continuum of initial conditions of \eqref{eq:traj-prod_model} since it is defined over $K_v$ and $K_a$.
Next, we discuss how to use $\FRS$ and the tracking error function $\gapp$ for online planning.

\section{Online Planning}\label{sec:online_planning}

RTD plans trajectories online by intersecting the FRS $\FRS$ with obstacles to identify safe desired trajectories, then optimizes over this set to fulfill an arbitrary cost function (e.g., minimize distance to a waypoint, desired acceleration, power usage) \cite{kousik2017safe,kousik2018_RTD_ijrr,vaskov2019_ACC,vaskov2019_RSS}.
Past implementations of RTD used polynomial superlevel sets to represent the FRS, and were required to incorporate tracking error in the offline computation, so the intersection of the FRS with obstacles implicitly accounted for tracking error.
In contrast, this section details how to add tracking error to the FRS online.
We also discuss obstacle representation, and the optimization program solved at each receding horizon iteration.

\subsection{The Online Planning Algorithm}

RTD plans trajectories in a receding horizon way by running Algorithm \ref{alg:online_planning} at each planning iteration.
At the beginning of each planning iteration, at time $t = 0$ WLOG, the parameters $k_v$ and $k_a$ are set as the estimated velocity and acceleration of the high-fidelity model \eqref{eq:high-fidelity_model}:
\begin{align}
    \label{eq:k_assignment}
    k_v & = v(0), \quad k_a = \tau(0)R(0)e_3 - m\regtext{g}e_3.
\end{align}
These are found by forwarding-integrating the model given the previous plan for a duration of $t\plan$.

For this discussion, suppose we have $g$ as in Assumption \ref{ass:tracking_error}.
The tracking error associated with the initial condition is added to $\FRS$ to make it a conservative approximation of the exact FRS.
RTD attempts to find a safe trajectory by optimizing over the set of safe parameters.
These safe trajectory parameters are found by taking the complement of the intersection of the FRS $\FRS$ with the sensed obstacles.
A limited amount of time $t\plan$ is specified within which RTD attempts to choose the trajectory to be followed in the next iteration.
If no trajectory is found in time, the quadrotor continues executing its previous trajectory, which brings it to a stationary hover as per \eqref{eq:traj-prod_model_1D}.
A single iteration of the online planning algorithm is summarized in Algorithm \ref{alg:online_planning}.
Each step of the algorithm is explained in this section.

\begin{algorithm}[t]
\small
\begin{algorithmic}[1]
    \State {\bf Require:} $g$ as in Assum. \ref{ass:tracking_error}, $\FRS$ as in \eqref{eq:FRS_conservative}, $\sensorhorizon$ as in Assumption \ref{ass:sensor_horizon}, $s_0$ as in \eqref{eq:high-fidelity_model}, and cost function $J: K \to \R$, previous plan $x_{\regtext{prev}}: \T \to S$, $k_v, k_a$ as in \eqref{eq:k_assignment},  $A \leftarrow \emptyset,\ b \leftarrow \emptyset$
    
    \State $\obsset \leftarrow \texttt{SenseObstacles}\left(x_0, \sensorhorizon \right)$\label{lin:sense_obs}
    
    \State $\FRS_\epsilon \leftarrow \texttt{AddTrackingError}\left(\FRS, g, k_v \right)$ // error-augmented FRS\label{lin:add_tracking_error}
    

    \State{\bf For:} $Z\indext_\epsilon \in \FRS_\epsilon$ // for each zonotope in error-aug. FRS slice
    
    \State\hspace{0.2in}{\bf For:} $O\upj \in \obsset$\label{lin:for_obstacle}
    
    \State\hspace{0.4in} $K\pkunsafe\uptj \leftarrow \texttt{IntersectObsWithFRS}\left(Z\indext\epsig, O\upj\right)$\label{lin:obs_intersect}
    
    \State\hspace{0.4in} $\left(A\uptj, b\uptj\right) \leftarrow \texttt{GenerateConstraints}\left(K\pkunsafe\uptj \right)$\label{lin:gen_constraint}
    
    \State\hspace{0.4in}{\bf Concatenate} $(A,b)$ $\leftarrow$ $[A ; A\uptj]$, $[b ; b\uptj]$ \label{lin:store_constraints}

    \State\hspace{0.2in}{\bf End}
    
    \State{\bf End}
    
    \State $x\des \leftarrow \texttt{OptimizeTrajectory} \left(J, A, b, k_v, k_a,t\plan,x_{\regtext{prev}}\right)$\label{lin:trajopt}
    
    \State{\bf Return} $x\des$
    
    \end{algorithmic}
\caption{\small A Single Planning Iteration (Online)}
\label{alg:online_planning}
\end{algorithm}

\subsubsection{Obstacles and Sensing}
In this work, we represent obstacles as boxes in 3D:
\begin{defn}\label{def:obs}
    An obstacle $O \subset X$ is a box as in Definition \ref{def:box}, with center $c \in X$, and length, width, and height $l, w, h \in \R_+$.
    Obstacles are static with respect to time.
\end{defn}
\noindent This is not a restrictive definition, since obstacles are typically represented as occupancy grids composed of boxes \cite{Chen2016_online_safe_traj_gen}.
When moving through the world, we assume that a quadrotor has a limited range over which it can sense obstacles.
We refer to this as the quadrotor's \defemph{sensor horizon} $\sensorhorizon$.
\begin{assum}\label{ass:sensor_horizon}
At any time $t$, an obstacle is considered to be sensed if any point $x_\regtext{obs}$ of the obstacle $O$ is within the sensor horizon from the quadrotor's COM position $x$:
\begin{align}
    \norm{x(t) - x_\regtext{obs}}_2 \leq \sensorhorizon \quad \forall x_\regtext{obs} \in O.
\end{align}
\end{assum}
\noindent Note, to ensure safety, the sensor horizon $\sensorhorizon$ must be larger than the distance traveled by the longest desired trajectory plus $\vmax$ times $t\plan$ \cite[Theorem 35]{kousik2017safe}.

Let $O\upj \subset X$ denote the $j$\ts{th} sensed obstacle, whose position is given relative the quadrotor's current position, and $\nObs$ be the number of obstacles within the quadrotor's sensor horizon.
Finally, let $X_\regtext{obs} \subset X$ represent the union of all sensed obstacles:
\begin{align}
    \label{eq:X_obs}
    X_\regtext{obs} = \bigcup_{j \in \nObs} O\upj
\end{align}

\subsubsection{Tracking Error}
Tracking error must be included in the FRS to identify safe trajectories of the high fidelity model.
Recall $g$ is as in Assumption \ref{ass:tracking_error}.
For any $t, k$, we first overapproximate the tracking error $g(t,k)$ by a box.
Let $ \boxfun(\cdot): \P(\R^3) \to \P(\R^3) $ overapproximate a bounded set of 3D positions with a box.
We add a tracking error box to each zonotope $Z\indext$ comprising the FRS $\FRS$ to obtain the \defemph{error-augmented FRS} $\FRS_{ \epsilon}$.
We also add the box $B\QR$ representing the quadrotor's body:
\begin{align}
    Z\indext_{\epsilon} &= Z\indext + \boxfun(g(t, k)) + B\QR\label{eq:add_tracking_error_to_FRS}\\
    \FRS_{\epsilon} &= \bigcup_{t \in T} Z\indext_{\epsilon}.
\end{align}

\subsubsection{Unsafe Trajectories}
Recall that $k_v$ and $k_a$ are specified by the quadrotor's state at the beginning of each planning iteration, so online planning is performed over the peak speeds $K\peak$.
We intersect obstacles $O\upj$ with the error-augmented FRS $\FRS_\epsilon$ to identify the unsafe set $K\pkunsafe \subset K\peak$ that could cause a collision with an obstacle.
A peak speed $k\peak$ is \defemph{unsafe} if the position dimensions of $\FRS_\epsilon$ associated with $k\peak$ intersect an obstacle.
Here, we detail how obstacles are intersected with $\FRS_\epsilon$

Notice that $\FRS_\epsilon \subset X \times K_v \times K_a \times K\peak$ is defined over a continuum of positions and parameters.
Recall that $X_\regtext{obs}$ represents all sensed obstacle positions, and $k_v$ and $k_a$ are set as the initial velocity and acceleration of the quadrotor in the current planning step.
We obtain the \defemph{unsafe subset} $\FRS\unsafe$ by intersecting $\FRS_{\epsilon}$ with the obstacles and initial condition:
\begin{align}
    \label{eq:FRS_unsafe}
    \FRS\unsafe = \FRS_\epsilon \bigcap X_\regtext{obs} \times \{ k_v \} \times \{ k_a \} \times K\peak
\end{align}
The set of unsafe trajectory parameters $K\pkunsafe$ is the projection of $\FRS\unsafe$ onto the $K\peak$ subspace:
\begin{align}
    K\pkunsafe = \idpeak (\FRS\unsafe)
\end{align}
where $\idpeak: \P(X\times K) \to \P(K\peak)$ projects sets via the identity relation.

\subsubsection{Trajectory Optimization}
Let $J: K \to \R$ be an arbitrary cost function.
Then, online, we find
\begin{align}
    k\peak^* = {\regtext{argmin}}_{k\peak}\left\{J(k)\ \mid\ k\peak \notin K\pkunsafe\  \regtext{and feas. to \eqref{eq:vmax_and_amax_cons}}\right\},\label{prog:trajopt}
\end{align}
where $k = (k_v,k_a,k\peak)$.
If adding tracking error to the FRS, intersecting the FRS with obstacles, and running \eqref{prog:trajopt} complete within $t\plan$, then we return a desired trajectory $x\des$ as in \eqref{eq:traj-prod_model} parameterized by $(k_v,k_a,k\peak^*)$; otherwise, we continue executing the previously-found trajectory (which includes a fail-safe maneuver).

Now, we formalize that Algorithm \ref{alg:online_planning} is safe.
\begin{thm}\label{thm:online_planner_is_safe}
Suppose the quadrotor is described by \eqref{eq:high-fidelity_model} as in Assumption \ref{ass:high-fidelity_model}.
Suppose $g: T\times K \to \P(\R^3)$ is as in Assumption \ref{ass:tracking_error}.
Suppose the FRS $\FRS$ is found as in \eqref{eq:FRS_conservative}.
Suppose WLOG $t = 0$ and that the quadrotor is initially safe in a stationary hover.
Then, if the quadrotor plans in a receding-horizon way using Algorithm \ref{alg:online_planning}, it is safe for all time.
\end{thm}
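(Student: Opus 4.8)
The plan is to argue by induction over the planning iterations, maintaining the invariant that at the start of every iteration the quadrotor is executing a trajectory that has been verified collision-free (its body $B(x(t),w)$ never meets a sensed obstacle) over its entire remaining duration, including the fail-safe maneuver that terminates it. Safety for all time then follows, because the fail-safe maneuver of Definition \ref{def:tau_plan_and_failsafe} ends in a stationary hover, so even if planning never succeeds again the robot comes safely to rest. The base case is immediate: by hypothesis the quadrotor starts in a safe stationary hover, which is trivially collision-free and is itself a (degenerate) plan ending in a hover.

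For the inductive step I would first establish the central containment claim: if $k\peak^*$ is feasible to \eqref{eq:vmax_and_amax_cons} and $k\peak^* \notin K\pkunsafe$, then the body of the high-fidelity robot avoids every sensed obstacle for all $t \in \T$. This is where the earlier pieces combine. By \eqref{eq:FRS_conservative} the computed FRS $\FRS$ over-approximates all center-of-mass trajectories of \eqref{eq:traj-prod_model}; by Assumption \ref{ass:tracking_error} the true position satisfies $\idx(s(t)) \in \{x\des(t;k)\} + g(t,k)$; and by \eqref{eq:add_tracking_error_to_FRS} the error-augmented slices add $\boxfun(g(t,k))$ and the body box $\BQR$ to each $Z\indext$. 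Hence, for the chosen $k$, the region actually occupied by the robot's body is contained in the union over $t$ of the slices $Z\indext_\epsilon$ restricted to $\{k_v\}\times\{k_a\}\times\{k\peak^*\}$. Since $K\pkunsafe$ is exactly the projection onto $K\peak$ of the intersection \eqref{eq:FRS_unsafe} of $\FRS_\epsilon$ with $X_\regtext{obs}$, and $k\peak^* \notin K\pkunsafe$, none of these slices meet any obstacle, giving the claim.

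Next I would handle the timing and handoff. Because the next iteration is allotted only $t\plan$ and the new plan is built to begin at $t=t\plan$ from the state obtained by forward-integrating the previous plan, as in \eqref{eq:k_assignment}, the segment of the previous trajectory executed during the planning window is already covered by the previous iteration's guarantee, and the initial condition assumed by $\FRS_\epsilon$ is consistent with the state at which the new plan begins. Two cases then close the induction: (i) if \eqref{prog:trajopt} returns a feasible $k\peak^*$ within $t\plan$, the robot switches to the corresponding $x\des$, which by the containment claim and the construction of \eqref{eq:traj-prod_model} is safe over all of $\T$ and ends in a hover; (ii) if no plan is found, the robot continues the previous trajectory, whose remainder, including its fail-safe maneuver, was verified safe in the previous iteration. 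Either way the invariant is preserved.

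The main obstacle I expect is making the timing and sensing argument rigorous, rather than the containment algebra. Specifically, I must show that no obstacle relevant during the next combined plan-and-execute window can be missed: this requires Assumption \ref{ass:sensor_horizon} together with the requirement that $\sensorhorizon$ exceed the longest reachable distance plus $\vmax\,t\plan$, so that any obstacle the robot could reach is already in $X_\regtext{obs}$ when constraints are generated in Algorithm \ref{alg:online_planning}. I would also verify the handoff consistency carefully, namely that $k_v,k_a$ from \eqref{eq:k_assignment} respect the interval and acceleration assumptions baked into $\gapp$ and $\FRS_\epsilon$, since any mismatch there would break the containment claim precisely at the instant control passes to the new plan.
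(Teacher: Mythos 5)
Your proposal is correct and follows essentially the same route as the paper's own proof, which is a two-sentence sketch invoking exactly the ingredients you develop: conservatism of $\FRS$ and $g$ (your containment claim) plus the fail-safe maneuver guaranteeing that the robot either executes a verified-safe plan or comes to a stationary hover (your induction invariant). Your write-up simply makes explicit the receding-horizon induction and the sensor-horizon/handoff details that the paper defers to citations (\cite[Remark 70]{kousik2018_RTD_ijrr}, \cite[Theorem 35]{kousik2017safe}).
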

\begin{proof}
This theorem follows from the conservative definitions of $\FRS$ and $g$, and from the fact that any planned trajectory contains a fail-safe maneuver.
In other words, by construction, the quadrotor always either executes a safe trajectory, or maintains a stationary hover.
\end{proof}
\noindent Theorem \ref{thm:online_planner_is_safe} is stated briefly to summarize how RTD either constructs safe plans or commands the robot to execute a fail-safe maneuver, by relying on conservatism.
For a more detailed treatment, see \cite[Remark 70]{kousik2018_RTD_ijrr}.

\subsection{Implementation}

We implement Algorithm \ref{alg:online_planning} as follows.

\subsubsection{Obstacles and Sensing}
Obstacles (as in Definition \ref{def:obs} and Assumption \ref{ass:sensor_horizon}) are given by \texttt{SenseObstacles} (Line \ref{lin:sense_obs}).
For our choice of $\vmax$, $t\peak$, and $\tfin$ (see Table \ref{tab:sys_and_des_traj_params}), we find that $d\sense = 12$ m is sufficient \cite[Theorem 35]{kousik2017safe}.

\subsubsection{Tracking Error in the FRS}
Recall that we can approximate $g$ with $\gapp$ as in \eqref{eq:gapp_defn}.
We implement \texttt{AddTrackingError} (Line \ref{lin:add_tracking_error}) using \eqref{eq:add_tracking_error_to_FRS} as written, but with $\gapp$ instead of $g$.

\subsubsection{Unsafe Trajectories}
The intersection in \eqref{eq:FRS_unsafe} to obtain $\FRS\unsafe$ is implemented as \texttt{IntersectObsWithFRS} (Line \ref{lin:obs_intersect}) by intersecting each zonotope $Z\indext_\epsilon$ comprising $\FRS_\epsilon$ with each obstacle $O\upj$.
We find that for the quadrotor model and obstacle representations we have chosen, the intersection in \eqref{eq:FRS_unsafe} can be computed exactly.
This requires the use of the following lemma regarding the structure of $Z\indext_\epsilon$.
\begin{lem}
    \label{lem:gen_matrix_decomp}
    The zonotope $Z\indext_\epsilon$ can be written as
    \begin{align}
        \label{eq:gen_matrix_smol}
        Z\indext_\epsilon = \left\lbrace  y \in \mathbb{R}^{12} \ \Big| \ y = \begin{bmatrix}
        c\indext_{\epsilon, 1} \\
        c\indext_{\epsilon, 2} \\
        c\indext_{\epsilon, 3}
        \end{bmatrix} 
        + \begin{bmatrix}
        G\indext_{\epsilon, 1} & 0_{4 \times 4} &  0_{4 \times 4} \\
        0_{4 \times 4} & G\indext_{\epsilon, 2} &  0_{4 \times 4} \\
        0_{4 \times 4} & 0_{4 \times 4} & G\indext_{\epsilon, 3} 
        \end{bmatrix} \beta \right\rbrace
    \end{align}
    where $c\indext_{\epsilon, i}$ and $G\indext_{\epsilon, i}$ take the form
    \begin{align}
        \label{eq:gen_matrix_ind}
        c\indext_{\epsilon,i} = \begin{bmatrix}
        c_x \\
        c_v \\
        c_a \\
        c\peak
        \end{bmatrix}, \qquad
        G\indext_{\epsilon,i} = \begin{bmatrix}
        \gmxv & \gmxa & \gmxpk & \epsilon\indext_i \\
        \gmv & 0 & 0 & 0\\
        0 & \gma & 0 & 0\\
        0 & 0 & \gmpk & 0
        \end{bmatrix}
    \end{align}
    where each $c, \gm \in \R$, $\gmv, \gma, \gmpk$ are nonzero, $\epsilon\indext_i \in \R_+$  and $i  = 1,2,3$.
\end{lem}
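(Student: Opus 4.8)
The plan is to trace $Z\indext_\epsilon$ back to its three ingredients --- the 1D computed FRS $\FRS\oneD$ produced by CORA, the block-diagonal stacking \eqref{eq:gen_matrix}, and the axis-aligned boxes added in \eqref{eq:add_tracking_error_to_FRS} --- and to show that each step preserves the claimed sparse structure. The heart of the matter is the 1D FRS. First I would observe that the 1D trajectory-producing dynamics \eqref{eq:traj-prod_model_1D} are \emph{linear} in the augmented state $(p\des,\kp_v,\kp_a,\kp\peak)$: the parameters are constant ($\dot\kp_v=\dot\kp_a=\dot\kp\peak=0$), and $\dot p\des=f\oneD(t,\kp)$ depends on no state other than a time-varying linear combination of the parameters. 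Hence the system matrix $A(t)$ has nonzero entries only in its position row, so CORA's linearization is exact (consistent with the remark that linearization error stays negligible for this state-independent model), and its state-transition matrix $\Phi(t,0)$ is upper-triangular with the $3\times3$ identity on the parameter block and three time-dependent entries $\phi_v(t),\phi_a(t),\phi\peak(t)$ coupling the parameters into position.

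Next I would apply $\Phi(t,0)$ to the diagonal initial generator matrix $\diag(0,\kp_v^+,\kp_a^+,\kp\peak^+)$ from \eqref{eq:zono_init}. A direct computation shows each parameter's generator is carried unchanged in its own parameter row while acquiring a single position entry, yielding exactly the three coupling generators $(\gmxv,\gmv,0,0)$, $(\gmxa,0,\gma,0)$, $(\gmxpk,0,0,\gmpk)$ with $\gmv=\kp_v^+$, $\gma=\kp_a^+$, $\gmpk=\kp\peak^+$ all nonzero (the parameter intervals are nondegenerate and the parameters never evolve, so these generators never vanish). The one point requiring care --- and the main obstacle --- is CORA's \emph{time-interval} over-approximation: over each time step the coupling coefficients $\phi_\bullet(t)$ vary, so I must argue that the resulting bloating generators live in the position coordinate only. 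This holds because the parameter coordinates are genuinely constant along every trajectory, so the reachable set has zero spread in those coordinates across a step; the only over-approximation is the position spread induced by the varying $\phi_\bullet$, which CORA represents by generators of the form $(\delta,0,0,0)$. Consequently no generator ever places a nonzero entry in a parameter row beyond the single one it began with. (A secondary caveat is that the argument presumes the reachability computation does not apply order reduction that would split a coupling generator onto separate axes and destroy the position-parameter coupling; this is justified by the small generator count of this low-dimensional model.)

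With the 1D structure in hand, the 3D result assembles mechanically. The block-diagonal construction \eqref{eq:gen_matrix} places three independent copies of the 1D generator matrix on the diagonal, so the coupling and bloating structure is inherited block by block. Adding the tracking-error box $\boxfun(g(t,k))$ and body box $\BQR$ as in \eqref{eq:add_tracking_error_to_FRS} contributes, via the box-as-zonotope representation, one axis-aligned generator per spatial coordinate; since each such generator lies along a single position axis it enters exactly one $4\times4$ block in its position row, preserving block-diagonality. Finally, within each block all purely-position generators --- the CORA bloating generators together with the box generator --- are parallel, so their Minkowski sum is exactly an interval and they collapse without over-approximation into the single generator $(\epsilon\indext_i,0,0,0)$ with $\epsilon\indext_i\in\R_+$. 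Collecting the three coupling generators and this single error generator yields precisely the block $G\indext_{\epsilon,i}$ of \eqref{eq:gen_matrix_ind}, completing the proof.
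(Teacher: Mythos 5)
Your proof is correct and follows essentially the same route as the paper's: both rest on the facts that the parameters are constant and mutually decoupled, that each parameter influences only its own position coordinate (forcing every $k$-dependent generator to have exactly one nonzero position entry), and that every error source (linearization/bloating, tracking error, robot body) acts only on a position row, so all such generators collapse into the single column containing $\epsilon\indext_i$. The difference is one of resolution rather than approach: where the paper asserts these facts ``by construction,'' you derive them from the mechanics of the reachability computation (the upper-triangular state-transition matrix acting on the diagonal initial generators, position-only enclosure generators, and the order-reduction caveat), which adds rigor but not a new idea.
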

\begin{proof}
    Note, the rows of $c\indext_{\epsilon, i}$ and $G\indext_{\epsilon, i}$ represent the $x_i, \kp_{v,i}, \kp_{a,i}$, and $\kp\peaki$ dimensions respectively.
    We refer to columns of $G\indext_\epsilon$ with nonzero elements in the dimensions representing $\kp_{v,i}, \kp_{a,i}$, and $\kp\peaki$ as \defemph{$k$-dependent}, while the rest of the columns are \defemph{$k$-independent}.

    By construction (see \eqref{eq:gen_matrix}), $G\indext_\epsilon$ has the block diagonal structure in \eqref{eq:gen_matrix_smol}.
    It remains to be shown that the blocks can be written in the form shown in \eqref{eq:gen_matrix_ind}.
    By construction, the parameters $\kp_{v,i},\kp_{a,i}$, and $\kp\peaki$ have no dependence on each other, their dynamics are zero, and they are drawn from sets of nonzero measure.
    Each of $(\kp_{v,i},\kp_{a,i}, \kp\peaki)$ affects only the position $x_i$ in the corresponding dimension $i = 1, 2, 3$.
    This ensures columns with nonzero elements in rows corresponding to $\kp_{v,i},\kp_{a,i}$, or $\kp\peaki$ must have a single other nonzero element in the row corresponding to $x_i$.
    Finally, the $k$-independent columns containing $\epsilon\indext_i$ represent sources of error and uncertainty (specifically, tracking error, linearization error, and overapproximation of the body of the quadrotor) separately in each dimension (so, they are nonzero only in the row corresponding to $x_i$).
    Therefore, for each position dimension $x_i$, all columns representing the $k$-independent sources of error of the original generator matrix can be combined into a single column containing $\epsilon\indext_i$.
\end{proof}

Next, we describe how to find the unsafe set of control parameters $K\pkunsafe\uptj$ given a zonotope $Z\indext_\epsilon$, obstacle $O\upj$, and initial velocity and acceleration $k_v$ and $k_a$.
Let $\regtext{proj}_{X_i}: \P(x) \to \P(X_i)$ project sets from $X$ into the $i$\ts{th} position dimension via the identity relation.
\begin{thm}
    Given $O\upj$, let $[\xobsm, \xobsp] = \regtext{proj}_{X_i}(O\upj)$ be the interval that is the projection of the obstacle onto the $i$\ts{th} position dimension.
    Given time $t$ and obstacle $O\upj$, the set of unsafe control parameters in the $i$\ts{th} dimension $K\pkunsafei\uptj$ that could cause a collision with that obstacle at that time is given by
    \begin{align}
        \label{eq:betaminusbetaplus}
        K\pkunsafei\uptj &= \begin{cases}
            [ \gmpk \betapkmin , \gmpk\betapkmax ],  &\regtext{if} \ \betapk^- \leq 1 \ \regtext{and} \ \betapk^+ \geq -1\\
            \emptyset, &\regtext{otherwise,}
        \end{cases}
    \end{align}
    where
    \begin{align}
        \label{eq:betaminus}
        \beta\peak^- &= \frac{1}{\gmxpk} \left( \xobsm - \epsilon\indext_i - c_x - \gmxv \frac{\kp_{v,i} - c_v}{\gmv} - \gmxa \frac{\kp_{a,i} - c_a}{\gma} \right), \\
        \label{eq:betaplus}
        \beta\peak^+ &= \frac{1}{\gmxpk} \left( \xobsp + \epsilon\indext_i - c_x - \gmxv \frac{\kp_{v,i} - c_v}{\gmv} - \gmxa \frac{\kp_{a,i} - c_a}{\gma} \right),
    \end{align}
    and
    \begin{align}
        \betapkmin= \min(\betapk^-, -1),\quad \betapkmax = \max(\betapk^+, 1). 
    \end{align}
    The set of unsafe control parameters is then the box
    \begin{align}
        \label{eq:Kpkunsafe1D}
        K\pkunsafe\uptj = K\pkunsafeone\uptj \times K\pkunsafetwo\uptj \times K\pkunsafethree\uptj.
    \end{align}
\end{thm}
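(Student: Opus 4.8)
The plan is to exploit the block-diagonal structure established in Lemma \ref{lem:gen_matrix_decomp} to decouple the three position dimensions, reduce the obstacle intersection to a one-dimensional problem in each coordinate, and then reassemble the full unsafe set as a Cartesian product. By \eqref{eq:gen_matrix_smol}--\eqref{eq:gen_matrix_ind}, the position $x_i$ and the parameters $(\kp_{v,i},\kp_{a,i},\kp\peaki)$ depend only on the four coefficients in block $i$ of $\beta$, and no other dimension shares those coefficients. Hence a point of $Z\indext_\epsilon$ lies in the box obstacle $O\upj$ with $k_v,k_a$ fixed (the intersection in \eqref{eq:FRS_unsafe}) exactly when, for each $i$ independently, block $i$ admits coefficients placing $x_i$ in the interval $[\xobsm,\xobsp]=\regtext{proj}_{X_i}(O\upj)$. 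This is what will justify the coordinatewise product \eqref{eq:Kpkunsafe1D}; it then remains to compute each factor $K\pkunsafei\uptj$.

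Within block $i$ I would read the four rows of \eqref{eq:gen_matrix_ind} as scalar equations. The rows $\kp_{v,i}=c_v+\gmv\beta_1$ and $\kp_{a,i}=c_a+\gma\beta_2$ pin down $\beta_1,\beta_2$ uniquely once $k_v,k_a$ are fixed, since $\gmv,\gma$ are nonzero; the row $\kp\peaki=c\peak+\gmpk\beta_3$ ties the peak-speed coordinate to the single coefficient $\beta_3$; and the position row reads $x_i=c_x+\gmxv\beta_1+\gmxa\beta_2+\gmxpk\beta_3+\epsilon\indext_i\beta_4$. Substituting the solved $\beta_1,\beta_2$ collapses the first three terms into the constant $\ol{c}=c_x+\gmxv\tfrac{\kp_{v,i}-c_v}{\gmv}+\gmxa\tfrac{\kp_{a,i}-c_a}{\gma}$, so $x_i=\ol{c}+\gmxpk\beta_3+\epsilon\indext_i\beta_4$. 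For fixed $\beta_3$, as $\beta_4$ ranges over $[-1,1]$ and since $\epsilon\indext_i\geq 0$, the coordinate $x_i$ sweeps $[\ol{c}+\gmxpk\beta_3-\epsilon\indext_i,\ \ol{c}+\gmxpk\beta_3+\epsilon\indext_i]$. This interval meets $[\xobsm,\xobsp]$ iff the two one-sided inequalities $\ol{c}+\gmxpk\beta_3-\epsilon\indext_i\leq\xobsp$ and $\ol{c}+\gmxpk\beta_3+\epsilon\indext_i\geq\xobsm$ both hold; dividing by the (nonzero) $\gmxpk$, under the sign convention $\gmxpk>0$ (the opposite sign merely swaps the roles of the two bounds), turns these into $\beta_3\in[\betapk^-,\betapk^+]$ with $\betapk^\pm$ exactly as in \eqref{eq:betaminus}--\eqref{eq:betaplus}.

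Finally I would convert this unsafe band in $\beta_3$ back to peak speed through $\kp\peaki=c\peak+\gmpk\beta_3$; since the parameter dynamics are zero, the peak-speed row of the center satisfies $c\peak=0$ and $\gmpk\neq 0$ by construction, giving $K\pkunsafei\uptj=\gmpk[\betapk^-,\betapk^+]$ before feasibility is imposed. Admissible coefficients satisfy $\beta_3\in[-1,1]$ (equivalently $\kp\peaki\in K\peak$), so the unsafe band must be reconciled with this range: the empty branch of \eqref{eq:betaminusbetaplus} arises precisely when $[\betapk^-,\betapk^+]$ fails to overlap $[-1,1]$, i.e.\ when $\betapk^->1$ or $\betapk^+<-1$, and otherwise one reads off $[\gmpk\betapkmin,\gmpk\betapkmax]$. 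The step I expect to demand the most care is this clamping: the literal intersection of $[\betapk^-,\betapk^+]$ with $[-1,1]$ is $[\max(\betapk^-,-1),\min(\betapk^+,1)]$, whereas the stated $\betapkmin,\betapkmax$ extend the band outward, so I would verify that the stated interval contains the true unsafe set (the direction required for safety, since trajectories in $K\pkunsafe$ are excluded in \eqref{prog:trajopt}) and argue that the outward extension is harmless over the feasible range. With the one-dimensional factors in hand, box membership factors coordinatewise, and the decoupling of the first paragraph yields $K\pkunsafe\uptj=K\pkunsafeone\uptj\times K\pkunsafetwo\uptj\times K\pkunsafethree\uptj$ as claimed in \eqref{eq:Kpkunsafe1D}.
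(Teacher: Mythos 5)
Your derivation is, in substance, the paper's own proof: you use the block-diagonal structure of Lemma \ref{lem:gen_matrix_decomp} to decouple the three dimensions, pin down $\betav = (\kp_{v,i}-c_v)/\gmv$ and $\betaa = (\kp_{a,i}-c_a)/\gma$ from the fixed initial condition, solve the position row for the band $[\betapk^-,\betapk^+]$, map it to peak speeds through the fourth row, and justify the Cartesian product \eqref{eq:Kpkunsafe1D} by the fact that the obstacle is an axis-aligned box and no coefficient is shared across blocks. Two of your refinements are genuine improvements in rigor over the printed proof: treating $\epsilon\indext_i$ as a fourth generator whose coefficient sweeps $[-1,1]$ (the paper simply asserts this is ``equivalent to buffering the obstacle by $\pm\epsilon\indext_i$''), and observing that $c\peak = 0$, which the printed interval $[\gmpk\betapkmin,\gmpk\betapkmax]$ silently requires and which holds because the parameter rows of the initial zonotope \eqref{eq:zono_init} are centered at the origin and the parameter dynamics are zero.

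On the clamping step you flag as delicate: you have caught what is almost certainly a typo in the paper, but your proposed repair is the wrong one. As you note, the true unsafe set of coefficients is the intersection $[\betapk^-,\betapk^+]\cap[-1,1] = [\max(\betapk^-,-1),\,\min(\betapk^+,1)]$, and this is exactly what the paper's own proof describes when it says it enforces $\betapk^-,\betapk^+\in[-1,1]$; the printed $\betapkmin=\min(\betapk^-,-1)$, $\betapkmax=\max(\betapk^+,1)$ have $\min$ and $\max$ swapped. Your fallback --- accept the outward extension because it contains the true unsafe set --- does preserve safety, but it is not ``harmless over the feasible range'': in the nonempty branch one always has $[\min(\betapk^-,-1),\,\max(\betapk^+,1)]\supseteq[-1,1]$, so the printed formula marks \emph{every} feasible $\kp\peaki$ unsafe as soon as any collision is possible, collapsing each $K\pkunsafe\uptj$ to all-or-nothing and rendering the parameterized planner vacuous (contrary to \eqref{eq:FRS_unsafe}, the figures, and the experiments). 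The statement to prove is the equality with the clamped interval, and your ``literal intersection'' derivation already proves it; you should simply conclude that the theorem's displayed $\min/\max$ should be read as $\max/\min$ rather than argue the printed version is acceptable.
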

\begin{proof}
    First, notice that the effect of the position error term $\epsilon\indext_i$ in \eqref{eq:gen_matrix_ind} is equivalent to buffering the obstacle by $\pm\epsilon\indext_i$.
    Then, we want to solve for all coefficients $(\betav, \betaa, \betapk) \in \R^3$ such that:
    \begin{align}
    \label{eq:beta_intersection}
    c\indext_{\epsilon,i} + G\indext_{\epsilon,i}\begin{bmatrix}
    I_{3 \times 3} \\
    0
    \end{bmatrix}\begin{bmatrix}
    \betav \\ \betaa \\ \betapk
    \end{bmatrix}\in \begin{bmatrix}[\xobsm - \epsilon\indext_i, \xobsp + \epsilon\indext_i] \\
    \{ \kp_{v,i} \} \\
    \{ \kp_{a,i} \} \\
    K\peaki
    \end{bmatrix}
    \end{align}
    where the right hand side is 4-dimensional interval equal to $[\xobsm - \epsilon\indext_i, \xobsp + \epsilon\indext_i] \times \{ \kp_{v,i} \} \times \{ \kp_{a,i} \} \times K\peaki$.
    Notice that the initial velocity and acceleration constrain $\betav$ and $\betaa$, so that $\betav = (\kp_{v,i} - c_v)/\gmv$ and $\betaa = (\kp_{a,i} - c_a)/\gma$.
    Writing out the first row of \eqref{eq:beta_intersection}, we obtain
    \begin{align}
        c_x + \gmxv \betav + \gmxa \betaa + \gmxpk \betapk \in [\xobsm - \epsilon\indext_i, \xobsp + \epsilon\indext_i]
    \end{align}
    which give \eqref{eq:betaminus} and \eqref{eq:betaplus} (for more details on interval arithmetic, see \cite{althoff2015_cora}).
    Then, \eqref{eq:betaminusbetaplus} follows by examining the $4$\ts{th} row of the result of
    \begin{align}
        c\indext_{\epsilon,i} + G\indext_{\epsilon,i}\begin{bmatrix}
    I_{3 \times 3} \\
    0
    \end{bmatrix}\begin{bmatrix}
    \{ \betav \} \\ \{ \betaa \} \\ [\betapk^-, \betapk^+]
    \end{bmatrix}
    \end{align}
    and enforcing that $\betapk^-, \betapk^+ \in [-1, 1]$ if a collision is possible.
    Finally, because the intersection is computed separately in each position dimension, a parameter $k\peak \in K\peak$ is only unsafe if it is an element of the Cartesian product of 1D unsafe sets defined in \eqref{eq:Kpkunsafe1D}.
\end{proof}

\noindent The set of unsafe trajectory parameters $K\pkunsafe$ is then given by the union of each time and each obstacle's unsafe parameters:
\begin{align}
    K\pkunsafe = \bigcup_{t \in T, j \in \nObs}K\pkunsafe\uptj.
\end{align}

\subsubsection{Constraint Generation}
 To represent each unsafe set $K\pkunsafe\uptj$ as constraints for trajectory optimization, we use the function \texttt{GenerateConstraints} (Line \ref{lin:gen_constraint}).
Recall that each $K\pkunsafe\uptj \subseteq K\peak$ as in \eqref{eq:Kpkunsafe1D} is a 3-dimensional interval, which can therefore be represented as a box.
Let $c = (c_1, c_2, c_3) \in \R^3$ be the center of $K\pkunsafe\uptj$, and $l, w, h \in R$ be the length, width and height of $K\pkunsafe\uptj$.
We now discuss how to generate constraints to check whether $k\peak \in K\peak$ is contained in $K\pkunsafe\uptj$.
\begin{thm}
    Given $k\peak \in K\peak$, we can check if it is in $K\pkunsafe\uptj$ with:
\begin{align}
    \min(A\uptj k\peak + b\uptj) \ < \ 0 \implies k\peak \not\in K\pkunsafe\uptj\label{eq:cons_eval_test_pt}
\end{align}
where the $\min$ is taken over the elements of its argument, and $A\uptj$ and $b\uptj$ are:
\begin{align}
    \label{eq:constraints}
    A\uptj = \begin{bmatrix}
    1 & 0 & 0 \\
    -1 & 0 & 0 \\
    0 & 1 & 0 \\
    0 & -1 & 0 \\
    0 & 0 & 1 \\
    0 & 0 & -1
    \end{bmatrix}, \quad b\uptj = \begin{bmatrix}
    -c_1 + \frac{l}{2} \\
    c_1 + \frac{l}{2} \\
    -c_2 + \frac{w}{2} \\
    c_2 + \frac{w}{2} \\
    -c_3 + \frac{h}{2} \\
    c_3 + \frac{h}{2} \\
    \end{bmatrix}
\end{align}
\end{thm}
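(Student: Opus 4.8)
The plan is to recognize the claim as the standard point-in-box membership test, and to verify that the six rows of the vector $A\uptj k\peak + b\uptj$ encode exactly the six halfspace inequalities that define the box $K\pkunsafe\uptj$. First I would recall from Definition~\ref{def:box} that, since $K\pkunsafe\uptj$ is a box with center $c = (c_1,c_2,c_3)$ and side lengths $l,w,h$, a point $k\peak$ with components $\kp\peakone, \kp\peaktwo, \kp\peakthree$ lies in $K\pkunsafe\uptj$ if and only if each coordinate lies in the corresponding interval, i.e. $c_i - \tfrac{s_i}{2} \le \kp\peaki \le c_i + \tfrac{s_i}{2}$ for $i = 1,2,3$, where $(s_1,s_2,s_3) = (l,w,h)$.

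Next I would split each two-sided interval constraint into a pair of nonnegativity constraints, namely $\kp\peaki - c_i + \tfrac{s_i}{2} \ge 0$ (the lower face) and $-\kp\peaki + c_i + \tfrac{s_i}{2} \ge 0$ (the upper face). Stacking these six scalar expressions over $i = 1,2,3$ and comparing them row-by-row with the definitions of $A\uptj$ and $b\uptj$ in \eqref{eq:constraints}, I would confirm that the alternating $+1/-1$ pattern in the columns of $A\uptj$ together with the alternating signs of $c_i$ in $b\uptj$ reproduce exactly these six left-hand sides; that is, the $\ell$-th entry of $A\uptj k\peak + b\uptj$ equals the left-hand side of the $\ell$-th of the inequalities above.

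Having established that $k\peak \in K\pkunsafe\uptj$ holds if and only if every entry of $A\uptj k\peak + b\uptj$ is nonnegative, I would close the argument by taking the contrapositive: if $\min(A\uptj k\peak + b\uptj) < 0$, then at least one of the six face inequalities fails, so $k\peak$ violates the box membership condition and hence $k\peak \notin K\pkunsafe\uptj$, which is precisely \eqref{eq:cons_eval_test_pt}. I expect no genuine obstacle here, as the argument is a direct verification; the only things to watch are the sign bookkeeping when matching rows to faces, and the remark that the empty case $K\pkunsafe\uptj = \emptyset$ from the preceding theorem is handled separately (no constraint is generated for it), so the present claim is stated for a genuine, nonempty box.
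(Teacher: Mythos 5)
Your proof is correct and takes essentially the same approach as the paper's: both characterize the box $K\pkunsafe\uptj$ as the intersection of the six positive half-spaces encoded by the rows of $A\uptj$ and $b\uptj$, so that a negative entry in $A\uptj k\peak + b\uptj$ certifies violation of one face inequality and hence $k\peak \not\in K\pkunsafe\uptj$. Your write-up simply makes the row-by-row sign verification (and the harmless empty-set edge case) explicit where the paper leaves them implicit.
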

\begin{proof}
Notice that each row of $A\uptj$ and $b\uptj$ defines an affine operation that evaluates positive for any $k\peak \in K\pkunsafe\uptj$.
When at least one row of $A\uptj k\peak + b\uptj$ evaluates negative, $k\peak$ lies outside of the intersection of the positive half-spaces defined by $A\uptj$ and $b\uptj$.
Therefore, $k\peak$ lies outside of $K\pkunsafe\uptj$ when the min is negative.
\end{proof}

Each $A\uptj$ and $b\uptj$ are concatenated (\texttt{Concatenate}, Line \ref{lin:store_constraints}) into a single $A$ and $b$, so that the constraints for all obstacles can be efficiently checked with matrix operations.

\subsubsection{Trajectory Optimization}\label{subsec:trajopt}
The final step in online planning is trajectory optimization (Line \ref{lin:trajopt}).
This requires optimizing over $K\peak$, which is 3D, so \texttt{OptimizeTrajectory} uses brute force sampling.
It generates a ball of approximately 10,000 samples in $K\peak$, and evaluates the constraints \eqref{eq:cons_eval_test_pt} and \eqref{eq:vmax_and_amax_cons} on the samples (this takes 50--150 ms).
It eliminates all infeasible samples, then evaluates an arbitrary cost function $J: K\peak \to \R$ on the remaining samples.
The sampled point with the lowest cost is denoted $k\peak^*$, which defines a new safe trajectory $x\des$ given by \eqref{eq:traj-prod_model} with $k = (k_v, k_a, k\peak^*)$.
If no feasible $k\peak^*$ can be found within $t\plan$, the quadrotor continues executing the previous trajectory, which ends in a stationary hover.
\section{Results}\label{sec:results}

\begin{figure}[t]
    \centering
    \includegraphics[width=\columnwidth]{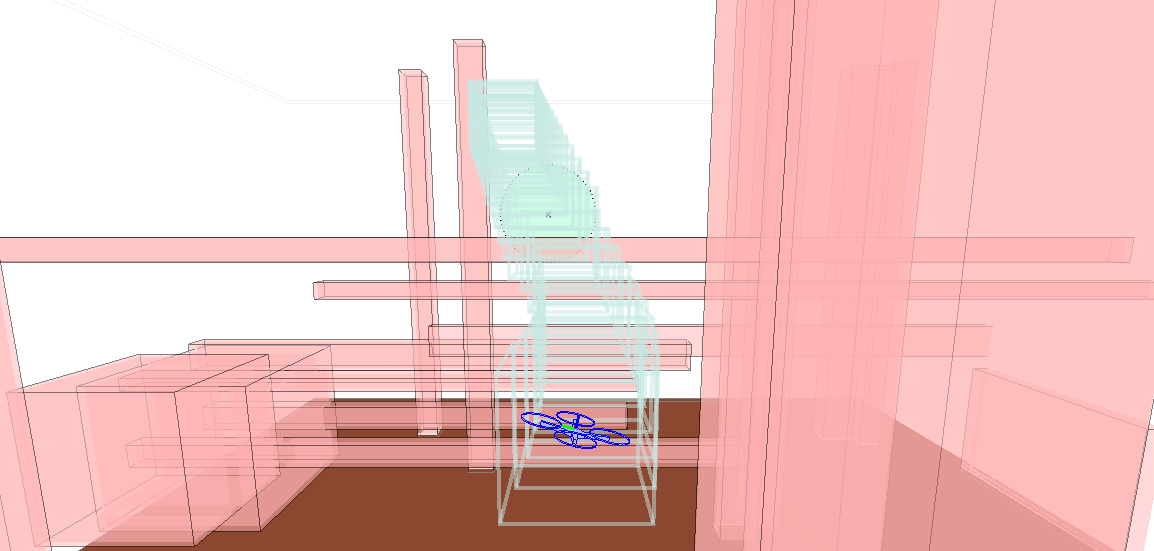}
    \caption{An example trajectory planned online in a cluttered environment with obstacles in light red and the ground in brown.
    The tube of light blue boxes, which does not intersect any obstacles, is the subset of the zonotope FRS for the current plan plus tracking error, so the quadrotor (in dark blue) is guaranteed to fly within the tube.
    The world and trajectory are shown in Figure \ref{fig:example_world}.}
    \label{fig:sim_example}
    \vspace*{-0.25cm}
\end{figure}

\begin{figure*}[t]
    \centering
    \includegraphics[width=0.9\textwidth]{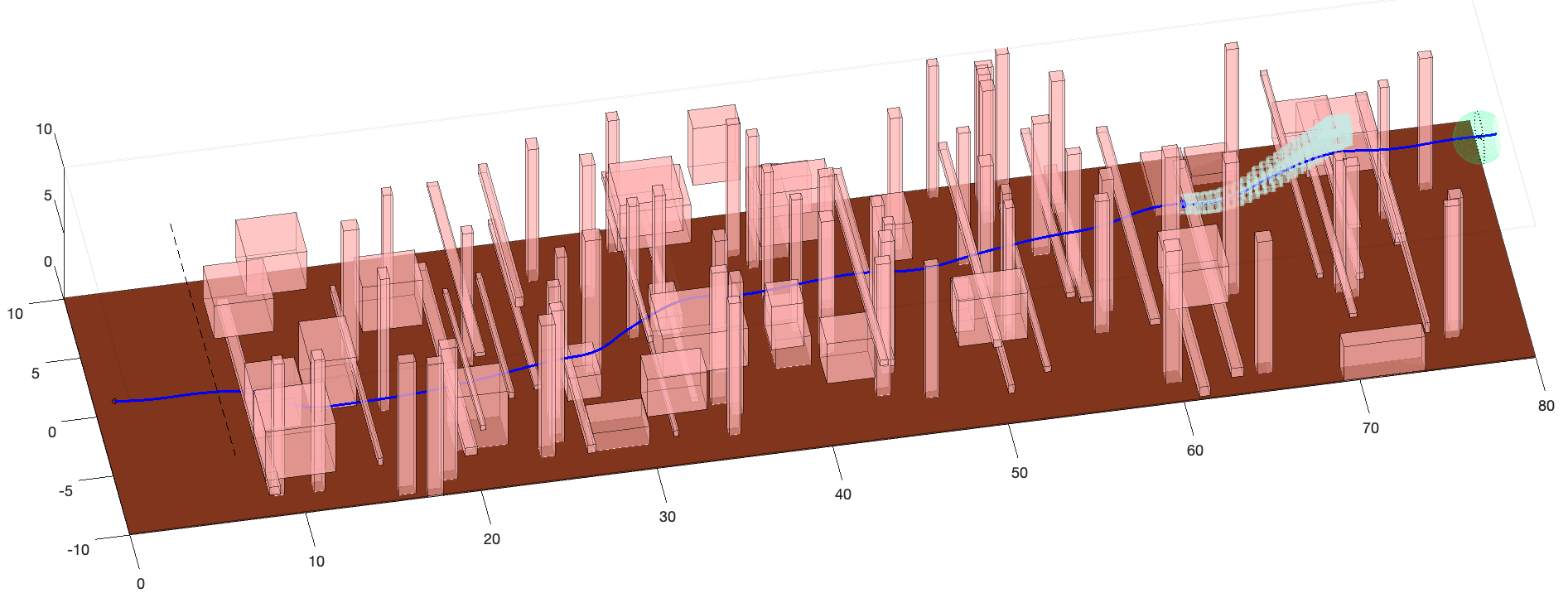}
    \caption{The example simulated world from Figure \ref{fig:sim_example}, with obstacles in light red, the ground in brown, world boundaries as axes, and the global goal as a light green sphere.
    A trajectory of the quadrotor is shown in dark blue, and goes from left to right.
    The quadrotor's reachable set (light blue) is shown for the same planning iteration as in Figure \ref{fig:sim_example}.}
    \label{fig:example_world}
\end{figure*}

All simulations, along with the tracking error and zonotope reachability computations, are performed in MATLAB.
We simulate an AscTec Hummingbird quadrotor \cite{asctec_hummingbird} at up to 5 m/s.
The system parameters are given in Table \ref{tab:sys_and_des_traj_params}.

\subsection{Simulation Implementation}\label{subsec:sim_implementation}
We simulate 500 cluttered worlds with 120 random obstacles each.
An example simulation is shown in Figures \ref{fig:sim_example} and \ref{fig:example_world} with a trajectory that RTD planned and executed in a receding-horizon way.
Each world is $80\times20\times10$ m in volume, with a random start location at one end and a random goal location at the other.
Note that the simulation environment performs collision checking of the body of the quadrotor with obstacles separately from how we generate and enforce constraints in Algorithm \ref{alg:online_planning}.

The quadrotor's dynamics \eqref{eq:high-fidelity_model} are simulated by Euler integration with a 5 ms time step; the rotation matrix dynamics are implemented as Lie-Euler integration on $\SO(3)$ as in \cite[(7)]{celledoni2014_lie_group_integrator} with $F_{y_n} = \hat{\om}_n$.
This was done to avoid Euler angle singularities.
Euler integration was found empirically to match a Runge-Kutta/Munthe-Kaas 4\ts{th} order method within millimeters in the quadrotor's position dimensions over the time horizon $\T$, while taking approximately 25\% of the computation time.
We include the numerical integration error as tracking error in the computation of $\gapp$ in Section \ref{subsec:finding_g}.

At each planning iteration, the quadrotor is given information about obstacles within a 12 m sensor horizon as in Assumption \ref{ass:sensor_horizon}, along with the world boundaries as obstacles.
The quadrotor is given $t\plan$ s to run Algorithm \ref{alg:online_planning} at each iteration; in other words, the quadrotor is required to plan in real time.

We ran two different implementations of Algorithm \ref{alg:online_planning}: one with a constant tracking error of 0.1 m, and one with trajectory-dependent tracking error computed as in Section \ref{subsec:finding_g}.
The distance $0.1$ m is the maximum tracking error found in any direction from computing the trajectory-dependent tracking error function $\gapp$.
The cost function used at each planning iteration (as in Section \ref{subsec:trajopt}) is to minimize the distance between the quadrotor and a waypoint at the time $t\peak$; the waypoint is placed 5 m ahead of the quadrotor along a straight line between the robot and the global goal.
Note that this choice of waypoint is deliberate to force the quadrotor into situations where it has to execute a fail-safe maneuver.

\subsection{Simulation Results}
The quadrotor never crashed.
With constant tracking error of 0.1 m, it reached the goal in 84.8\% of trials.
With trajectory-dependent tracking error, it reached the goal in 91.2 \% of trials.
Note, we did not expect 100\% of goals reached, since the trials used randomly-generated obstacles, so some simulated worlds have no feasible path between start and goal.
This result confirms that including trajectory-dependent tracking error reduces conservatism.
\section{Conclusion}\label{sec:conclusion}

We propose Reachability-based Trajectory Design (RTD) as a method for enabling autonomous quadrotors to plan aggressive, safe trajectories in unforeseen, cluttered environments.
This work extends RTD to a 13D system with zonotope reachability; provides an approximation of trajectory-dependent tracking error for a high-dimensional model of a quadrotor; and introduces a novel method to use zonotopes for safe planning online.
The proposed method is demonstrated over 500 simulations in random cluttered environments at speeds up to 5 m/s, with zero crashes.
In future work, we will implement RTD on hardware, and explore more types of trajectory-dependent uncertainty.

\renewcommand{\bibfont}{\normalfont\small}
{\renewcommand{\markboth}[2]{}
\printbibliography}

\end{document}